\newcommand{\arXiv}{}
\newcommand{\matrixCellWidth}{4.5cm}
\newcommand{\subparagraph}{}
\newtheorem*{theorem*}{Theorem}
\newcommand*{\inlineequation}[2][]{%
	\begingroup
	\refstepcounter{equation}%
	\ifx\\#1\\%
	\else
	\label{#1}%
	\fi
	\relpenalty=10000 %
	\binoppenalty=10000 %
	\ensuremath{%
		#2%
	}%
	~\@eqnnum
	\endgroup
}
\newif\ifcomm
	\newcommand{\mycomm}[3]{{\footnotesize{{\color{#2} \textbf{[#1: #3]}}}}}
	\newcommand{\CRdel}[1]{\textcolor{red}{\sout{#1}}}
    \newcommand{\mycomm}[3]{}
    \newcommand{\CRdel}[1]{}
\newtheorem{theorem}{Theorem}
\newtheorem{definition}{Definition}
\newtheorem{lemma}{Lemma}
\newenvironment{mystatement}{
    \begin{adjustwidth}{3em}{3em} 
    \itshape 
}{
    \end{adjustwidth}
}
\newcommand{\alg}{\emph{SkipPredict}\xspace}
\newcommand{\algFULL}{\emph{Skip or Predict}\xspace}
\newcommand{\algtwo}{\emph{DelayPredict}\xspace}
\begin{document}

\floatstyle{plaintop}
\restylefloat{table}

\title{\alg{}: When to Invest in Predictions for Scheduling}


\ifdefined\arXiv
\author[1]{Rana Shahout}

\author[1]{Michael Mitzenmacher}

\affil[1]{Harvard University, USA}
\date{}
\fi

\ifdefined\arXiv
\maketitle
\fi

\begin{abstract}

In light of recent work on scheduling with predicted job sizes, we consider the effect of the cost of predictions in queueing systems, removing the assumption in prior research that predictions are external to the system's resources and/or cost-free.
In particular, we introduce a novel approach to utilizing predictions, \alg{}, designed to address their inherent cost. Rather than uniformly applying predictions to all jobs, we propose a tailored approach that categorizes jobs based on their prediction requirements. To achieve this, we employ one-bit ``cheap predictions'' to classify jobs as either short or long. \alg{} prioritizes predicted short jobs over long jobs, and for the latter, \alg{} applies a second round of more detailed ``expensive predictions'' to approximate Shortest Remaining Processing Time for these jobs.  
Our analysis takes into account the cost of prediction. We examine the effect of this cost for two distinct models.  In the external cost model, predictions are generated by some external method without impacting job service times but incur a cost.  In the server time cost model, predictions themselves require server processing time, and are scheduled on the same server as the jobs.
\end{abstract}

\maketitle
\section{Introduction}
\label{sec:intro}
\begin{figure}[t]
   \centering
   \includegraphics[width=0.35\linewidth]{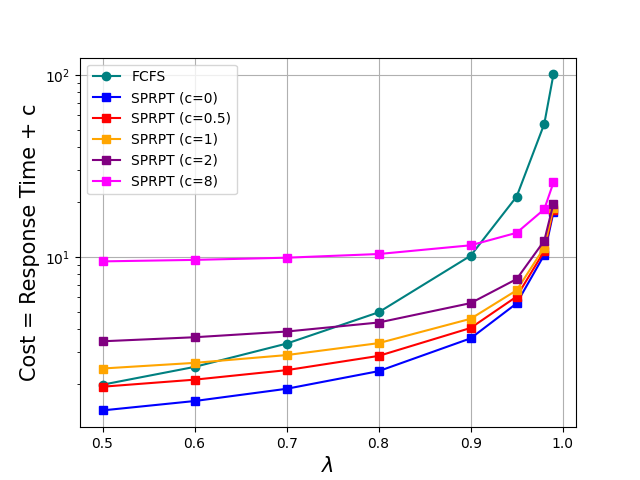}
   \caption{Considering prediction cost in SPRPT algorithm in M/M/1 system. The cost is the sum of mean response time and fixed prediction cost.}
   \label{fig:motivation_SPRPT}
\end{figure}

Machine learning research is advancing rapidly, reshaping even traditional algorithms and data structures. This intersection has led to the rise of ``algorithms with predictions'', also called learning-augmented algorithms, where classical algorithms are optimized by incorporating advice or predictions from machine learning models (or other sources). These learning-augmented algorithms have demonstrated their effectiveness across a range of areas, as shown in the collection of papers \cite{githubio} on the subject and as discussed in the surveys \cite{DBLP:books/cu/20/MitzenmacherV20,DBLP:journals/cacm/MitzenmacherV22}.  

Queueing systems are an example where the learning-augmented algorithm paradigm has been applied for scheduling. Several studies have examined queues with predicted service times rather than exact service times, generally with the goal of minimizing the average time a job spends in the system \cite{dell2015psbs,dell2019scheduling,mitzenmacher2019scheduling,mitzenmacher2021queues,wierman2008scheduling}, and additionally some recent works also consider scheduling jobs with deadlines \cite{salman2023evaluating,salman2023scheduling}.
However, existing works do not adequately model the resources required to obtain such predictions.
They often assume that predictions are provided ``for free'' when a job arrives, which may not be a realistic assumption in the practical evaluation of a system.  
Incorporating the cost of predictions is essential, as it could be argued that the resources devoted to calculating predictions might be more effectively used to directly process the jobs themselves. This perspective challenges the potential efficiency of integrating predictions into real-world queueing systems. As a result, the following questions arise:

\begin{mystatement}
When does the use of predictions, including their computation, justify their costs? Should all jobs be treated uniformly by computing predictions for each one?
\end{mystatement}

As a simple example, let us consider a model where predictions of the service time arrive with the job and do not affect the arrival or service times, but do introduce a fixed cost $c$ per job, so the total cost per job is the sum of the mean response time and fixed prediction cost. When we look at Shortest Predicted Remaining Processing Time (SPRPT) policy~\cite{mitzenmacher2019scheduling}, the improvement in the cost over FCFS naturally varies with the prediction cost, as illustrated in Figure~\ref{fig:motivation_SPRPT}. 


In this paper, we focus on the cost of predictions in settings where two stages of predictions are available.  We consider the
setting of an
an M/G/1 queueing system: jobs arrive to a single-server queue, according to a Poisson arrival process with general i.i.d service times. We examine two distinct cost models. In the first model, referred to as the external cost model, predictions are provided by an external server, thus they do not affect job service time. However, we do factor in a fixed\footnote{In Section~\ref{subsec: non_fixed_costs} we discuss the case of random costs from a distribution.} cost for these predictions. The expected overall cost per job in this model is the sum of the job's expected response time within the system and the cost associated with the time for prediction. In the second model, referred to as the server time cost model, predictions themselves require a fixed time from the same server as the jobs to produce, and hence a scheduling policy involves also scheduling the predictions. In this model, the expected overall cost per job is determined by the expected response time. As this model integrates the prediction process within the primary job processing system, it offers a different perspective on the cost implications of predictions as compared to the external model. (In particular, for heavily loaded systems, adding time for jobs to obtain predictions could lead to an overloaded, unstable system.)

As adding a single prediction to an M/G/1 model is relatively straightforward, we consider systems where we have {\em two stages} of prediction. In the first stage, we may simply predict whether a job is short or long. This type of one-bit prediction (studied in \cite{mitzenmacher2021queues}) is very natural for machine learning, and in practice may be much simpler and faster to implement. 
We therefore call these ``cheap predictions.''
In a possible second stage, we predict the service time for a job, which we refer to as ``expensive predictions,'' as in practice we expect them to be substantially more costly. (While we focus on these types of two stages, one could alternatively consider variations where the two stages could yield the same type of prediction; e.g. service time, with the cheap prediction being less accurate but consuming fewer resources.)
We introduce a scheduling policy, called \alg{} (\algFULL{}), which first categorizes jobs into short and long jobs with the first prediction, prioritizes short jobs over long ones, and restricts additional service time predictions exclusively to long jobs. \alg{} is shown in Figure~\ref{fig:skipredict_overview} and described more formally in Section~\ref{sec:policy}. We analyze the effect of the cost of prediction by considering \alg{} with the previously described external cost model and server cost model.

We compare \alg{} with three distinct previously studied policies, re-analyzing them with prediction costs in the two proposed models. First, we consider First Come First Served (FCFS), which does not require predictions (and hence incurs no cost from predictions). Second, 1bit~\cite{mitzenmacher2021queues}, a policy using only cheap predictions, separates jobs into predicted short and long jobs, and applies FCFS for each category, thereby eliminating the need for a second stage of prediction. Third, Shortest Predicted Remaining Processing Time (SPRPT) performs expensive predictions for all jobs, and no cheap predictions.
We find that service time predictions are particularly effective in high-load systems. Our analysis shows that \alg{} potentially outperforms the other policies (FCFS, 1bit, SPRPT) in both cost models, especially when there is a cost gap between cheap and expensive predictions. Additionaly, we present another alternative algorithm called \algtwo{} which avoid cheap predictions by running jobs for a fixed period before executing an expensive prediction. \algtwo{} initially schedules all jobs FCFS but limits them to a time $L$.
Jobs exceeding the limit $L$ are preempted, and given lower priority, and then they are scheduled by SPRPT.  We find \alg{} can also perform better than \algtwo{} when there is a \mbox{cost gap between these predictions.}

\subsection{Contributions and Roadmap}

This paper makes the following contributions.

\begin{itemize}
\item In Section~\ref{sec:model}, we introduce the \alg{} policy along with two cost models: the external cost model and the server cost model.

\item Section~\ref{sec:results} details our derivation of the mean response time of \alg{} for both predicted short and long jobs in each cost model, and includes a comparison of the external and server cost models.

\item In Section~\ref{sec:baselines}, we analyze 1bit and SPRPT policies in the external cost model and the server cost model and compare them with \alg{}. In Section~\ref{sec:simulation}, we empirically demonstrate the effectiveness of \alg{} over FCFS, 1bit and SPRPT.

\item In Section~\ref{sec:delayp}, we present \algtwo{} and derive its mean response time in both cost models.

\item  Section~\ref{sec:variants} discusses several possible variants of \alg{} for future study.

\end{itemize}

\begin{figure}[t]
   \centering
   \includegraphics[width=0.4\linewidth]{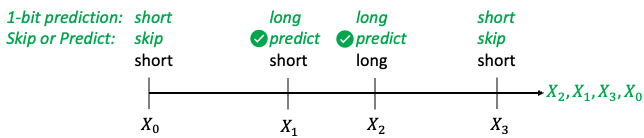}
   \caption{The \alg{} algorithm.}
   \label{fig:skipredict_overview}
\end{figure}
\section{Related Work}
\label{sec:related_works}


Our work fits in the relatively recent area of algorithms with predictions, which generally refers to the approach of aiming to improve algorithms (both in terms of theoretical bounds and in practical performance) by making use of predictions from sources such as machine-learning algorithms. 
(A brief survey of the area, including discussions of several papers, can be found in \cite{DBLP:books/cu/20/MitzenmacherV20,DBLP:journals/cacm/MitzenmacherV22}; also, see \cite{githubio} for a collection of papers in this area.) 

In scheduling, several studies have explored size-based scheduling algorithms that are informed with an estimate of the job sizes rather than the exact job size. Wierman and Nuyens~\cite{wierman2008scheduling} introduced bounds on the mean response time, mean slowdown, response-time tail, and the conditional response time of policies for a wide range of policies that schedule using inexact job-size information. The works~\cite{dell2015psbs, dell2019scheduling} primarily focus on evaluating the effects of inaccuracies on estimating job sizes. Scully and Harchol-Balter~\cite{scully2018soap} have examined scheduling policies that are based on the amount of service a job has received where the scheduler is assumed to only estimate the service received, potentially distorted by adversarial noise. Their research aims to develop scheduling policies that remain effective and robust despite these uncertainties.

In the realm of scheduling with predictions, Mitzenmacher~\cite{mitzenmacher2019scheduling} demonstrated that the analyses of various single-queue job scheduling approaches can be generalized to the context of predicted service times instead of true values. This work derives formulae for several queueing systems that schedule jobs based on predictions of service times. Also, it provides insights into how much mispredictions affect the mean response time of the analyzed policies.
In later work, Mitzenmacher~\cite{mitzenmacher2021queues} considered scheduling algorithms with a single bit of predictive advice, namely whether a job is ``large'' or ``small''; that is, if its size is above or below a certain threshold. If the advice bit is short, the job is placed at the front of the queue, otherwise it is placed at the back. The work also considers one-bit prediction schemes in systems with large numbers of queues using the power of two choices. This work shows that even small amounts of possibly inaccurate information can yield significant performance improvements.
Scully, Grosof, and Mitzenmacher~\cite{DBLP:conf/innovations/ScullyGM22} design a scheduling approach for M/G/1 queues that has mean response time within a constant factor of shortest remaining processing time (SRPT) when estimates have multiplicatively bounded error, improving qualitatively over simply using predicted remaining service times.
Azar, Leonardi, and Touitou study similar problems in the online setting, without stochastic assumptions, and consider the approximation ratio versus SRPT \cite{DBLP:conf/stoc/AzarLT21,DBLP:conf/soda/AzarLT22}.

Our work has a similar flavor to various ``2-stage'' problems, such as 2-stage stochastic programming and 2-stage stochastic optimization  (e.g., \cite{grass2016two,kolbin1977stochastic,shmoys2004stochastic,swamy2006approximation}). Here, soemwhat differently, our two stages are both predictions of service time, at different levels of specificity.  

We make extensive use of the SOAP framework (Schedule Ordered by Age-based Priority)~\cite{scully2018soap}, which was recently developed to analyze age-based scheduling policies. We use this framework to derive mean response time formulas.   
We provide a brief background to the framework in Section~\ref{sec:soap_background}.

\section{Model}
\label{sec:model}

We consider M/G/1 queueing systems with arrival rate $\lambda$. The processing times for each arriving job are independent and drawn based on the cumulative distribution $F(x)$, with an associated density function $f(x)$.

\subsection{Scheduling Algorithm: \alg{}}
\label{sec:policy}
\alg{} initially categorizes jobs based on a binary prediction of either short or long, which we refer to as a cheap prediction. 
Only jobs predicted as long are further scheduled for a detailed expensive prediction to get the predicted processing time.

With \alg{}, jobs that are predicted short have priority over all other jobs. Specifically, these short-predicted jobs are not preemptible and are scheduled based on First-Come, First-Served (FCFS), a non-sized-based policy since no predicted size is available. Jobs predicted to be long are preemptible and scheduled according to the Shortest Predicted Remaining Processing Time (SPRPT) with predicted size given by the expensive prediction.

We focus on a model where, given a threshold $T$, the cheap predictions are assumed to be independent over jobs, a job being predicted as short (less than $T$) with probability $p_T(x)$. 
Similarly, the expensive predictions are assumed to be independent over jobs, and they are given by a density function $g(x,y)$, where $g(x,y)$ is the density corresponding to a job with actual size $x$ and predicted size $y$.  Hence
$$\int_{y=0}^\infty g(x,y) dy = f(x).$$

The following quantities will be important in our analyses.  
\begin{definition}
    We define $S'_{<T}$ and $S'_{\ge T}$ to be the service times for predicted short jobs and predicted long jobs respectively. We also define the rate loads $\rho'_{< T}$ and $\rho'_{\ge T}$ respectively. 

$$\mathbb{E}[S'_{<T}] = \int_{0}^{\infty} x \cdot p_T(x) \cdot f(x) \, dx \quad 
\mathbb{E}[S'_{\ge T}] = \int_{0}^{\infty} x \cdot (1 - p_T(x)) \cdot f(x) \, dx$$

$$\mathbb{E}[{S'_{<T}}^2] = \int_{0}^{\infty} x^2 \cdot p_T(x) \cdot f(x) \, dx \quad 
\mathbb{E}[{S'_{\ge T}}^2] = \int_{0}^{\infty} x^2 \cdot (1 - p_T(x)) \cdot f(x) \, dx$$

$$\rho'_{< T} = \lambda \int_{x=0}^{\infty} xf(x)p_T(x)\, dx, \quad \rho'_{\ge T} = \lambda \int_{x=0}^{\infty} xf(x)(1-p_T(x))\, dx$$
\end{definition}

We consider two different models, external cost, in which the predictions are provided by an external server, and server cost, in which the predictions are scheduled on the same server as the job.
Next, we describe \alg{} and define the rank function for each model.

\subsection{Single-Queue, External Cost}

In this model, the predictions are provided externally, such as by an external server, with predicted long jobs going through an additional layer of prediction.  A job can be described by a triple $(x,b,r)$; we refer to this as a job's type. Here $x$ is the service time for the predictor, $b$ is the output from a binary predictor that determines whether the job is short or long, and for any long job, $r$ is the result of a service-time predictor that provides a real-number prediction of the service time.  If a job is predicted to be short, we do not consider $r$, and so we may take $r$ to be null. Again, we refer to $b$ as the cheap prediction and $r$ as the expensive prediction.

In this model, the predictions do not affect the service time of the job, and we treat the overall arrival process, still, as Poisson. Accordingly, \alg{} can be viewed as a two-class priority system: Class 1 is for short jobs, managed by FCFS within the class. Class 2 is for long jobs, according to SPRPT using service time prediction.
However, we do associate a cost with predictions.  All jobs obtain a cheap prediction at some constant fixed cost $c_1$, and all long jobs obtain an expensive prediction at some fixed cost $c_2$.
Accordingly, we can model the total expected cost for predictions per job in equilibrium as $C = c_1 + c_2 z$, where $z = \int f(x)(1-p_T(x))dx$ is the expected fraction of jobs requiring the second prediction.
In general, both $z$ and $c_1$ will depend on our choice of first layer prediction function, and similarly $c_2$ will depend on the choice of second layer prediction function. Therefore, for some parameterized families of prediction functions, we may wish to optimize our choice of predictors.
Specifically, letting $T$ be the expected response time for a job in the system in equilibrium, we might typically score a choice of predictors by the expected overall cost per job, which we model as a function $H(C,T)$, such as the sum of
the $C$ and $T$.

\begin{definition}
\label{def:external_cost}
In the external cost model, suppose $\mathbb{E}[T]_{ext}^{{PS}}$ and $\mathbb{E}[T]_{ext}^{{PL}}$ are the expected response time for predicted short job and predicted long job in the system in equilibrium respectively. Then, the total cost is $$(1-z) \cdot \mathbb{E}[T]_{ext}^{{PS}} + z \cdot \mathbb{E}[T]_{ext}^{{PL}} + C $$ where $z$ is the expected fraction of jobs requiring the second prediction and $C$ is the expected cost for prediction per job.
\end{definition}

\subsection{Single-Queue, Server Time Cost}

The server time cost model refers to the setting where predictions are scheduled on the same server as the jobs, so there is a server time cost based on a defined policy.

Jobs predicted as short are categorized as non-preemptible while in execution, thereby prioritizing their completion before predicting new jobs. However, jobs predicted as long are further scheduled for a detailed expensive prediction. Thus, cheap predictions outrank expensive predictions and long jobs. Similarly, expensive predictions are prioritized over predicted long jobs.

\alg{} now can be viewed as a four-class priority system: Class 1 is designated for short jobs, managed by FCFS within the class. Consequently, short jobs are non-preemptible, thus prioritizing them over predicting new ones. This approach is practical because even if the new jobs are predicted to be short, they are assigned behind the already running short jobs (Following FCFS). Class 2 includes jobs for cheap predictions, also operating under FCFS within their class. Class 3 involves jobs requiring expensive predictions, following the FCFS within the class. Finally, class 4 is reserved for long jobs, according to SPRPT using service time prediction from class 3.

\begin{definition}
\label{def:srv_cost}
In the server cost model, suppose $\mathbb{E}[T]_{srv}^{{PS}}$ and $\mathbb{E}[T]_{srv}^{{PL}}$ are the expected response time for predicted short job and predicted long job in the system in equilibrium respectively. Then, the total cost is $$(1-z) \cdot \mathbb{E}[T]_{srv}^{{PS}} + z \cdot \mathbb{E}[T]_{srv}^{{PL}}$$ where $z$ is the expected fraction of jobs requiring the second prediction.
\end{definition}

\section{Formulas via SOAP Analysis}
\label{sec:results}

We employ the SOAP framework~\cite{scully2018soap}, a (relatively) recently developed analysis method, to obtain precise formulas for mean response time\footnote{We note that SOAP provides for finding the Laplace-Stieltjes transform of the response time distribution;  we focus on the mean response time throughout this paper for convenience in comparisons.} of \alg{} in both the external cost model and in the server cost mode. While we could analyze the external cost model without SOAP as a two-class system, we choose to use SOAP for a consistent analysis.

\subsection{SOAP Background}
\label{sec:soap_background}
The SOAP framework can be used to analyze scheduling policies for M/G/1 queues 
that can be expressed in terms of rank functions.
Recent research by Scully and Harchol-Balter \cite{scully2018soap} has classified many scheduling policies as SOAP policies.
These policies determine job scheduling through a rank, always serving the job with the lowest rank. (In cases where multiple jobs share the lowest rank, the tie is resolved using First Come First Served.)
The rank function determines the rank of each job, and it can depend on certain static characteristics of the job, often referred to as the job's type or descriptor.
For example, the descriptor could represent 
the job's class (if the model has different classes), and other static attributes, such as its size (service time).
The rank can also depend on the amount of time the job has been served, often referred to as the job's age.    
A key assumption underlying SOAP policies is that a job's priority depends only on its own characteristics and its age, an aspect that aligns with our model and scheduling algorithm \alg{}. We refer the interested reader to \cite{scully2018soap} for more details.  

SOAP analysis uses the tagged-job technique.
That is, we consider a tagged job, denoted by $J$, of size $x_J$ and with descriptor $d_J$. 
We use $a_J$ to denote the amount of time $J$ has received service.  
The mean response time of $J$ is given by the sum of its waiting time (the time from when it enters to when it is first served) and the
residence time (time from first service to completion). To calculate the waiting time, SOAP considers the delays caused by other jobs, including ``old'' jobs that arrived before $J$ and ``new'' jobs that arrived after $J$.
A key concept in SOAP analysis is the worst future rank of a job, as ranks may change non-monotonically. The worst future rank of a job with descriptor $d_J$ and age $a_J$ is denoted by $rank_{d_J}^{\text{worst}}(a_J)$.
When $a_J=0$, the rank function is denoted by $r_{worst}= rank_{d_J}^{\text{worst}}(0)$.

In the SOAP framework, waiting time is shown to be equivalent to the transformed busy period in an M/G/1 system with arrival rate $\lambda$ and job size $X^{\text{new}}[rank_{d_J}^{\text{worst}} (a)]$ \footnote{$X^{\text{new}}[rank_{d_J}^{\text{worst}} (a)]$ represents how long a new job that just arrived to the system is served until it completes or surpasses $rank_{d_J}^{\text{worst}} (a)$}. The initial work of this period represents the delay caused by old jobs.
To deal with the delay due to old jobs, SOAP introduced a transformed system where jobs are categorized based on their rank. \emph{Discarded} old jobs, exceeding the rank threshold $r_{worst}$, are excluded from the transformed system. \emph{Original} old jobs, with a rank at or below $r_{worst}$, are considered as arrivals with rate $\lambda$ and a specific size distribution $X_{0}^{\text{old}}[r_{worst}]$ \footnote{$X_{0}^{\text{old}}[r_{worst}]$ represents how long a job is served while it is considered \emph{original} with respect to the rank $r_{worst}$}. \emph{Recycled old jobs}, currently at or below $r_{worst}$ but previously above this threshold, are treated as server vacations of length $X_{i}^{\text{old}}[r_{worst}]$\footnote{$X_{i}^{\text{old}}[r_{worst}]$ represents how long a job is served while it is considered \emph{recycled} for the $i$ time with respect to the rank $r_{worst}$} for $i \ge 1$ in the transformed system. As explained later, in \alg{} jobs could only be recycled once, so we only have $X_{1}^{\text{old}}[r_{worst}]$.

SOAP shows that, because Poisson arrivals see time averages, the stationary delay due to old jobs has the same distribution as queueing time in the transformed M/G/1/FCFS system. This system is characterized by 'sparse' server vacations, where (\emph{original}) jobs arrive at rate $\lambda$ and follow the size distribution $X_{0}^{\text{old}}[r_{worst}]$.


\begin{theorem}[Theorem 5.5 of \cite{scully2018soap}]
\label{soaptheorem}
Under any SOAP policy, the mean response time of jobs with descriptor $d$ and size $x_J$ is:
\begin{align*}
\mathbb{E}[T(x_J, d)] =  & \frac{\lambda \cdot \sum_{i=0}^{\infty} \mathbb{E}[X_i^{old}[r_{worst}]^2]}{2(1-\lambda \mathbb{E}[X_0^{old}[r_{worst}]) (1-\lambda\mathbb{E}[X^{new}[r_{worst}])} \\
& +\int_{0}^{x_J} \frac{1}{1-\lambda\mathbb{E}[X^{new}[rank_{d_J}^{\text{worst}} (a)]]} da.
\end{align*}
\end{theorem}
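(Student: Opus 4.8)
The plan is to follow the tagged-job technique, decomposing the response time of $J$ into its waiting time (from arrival until first service) and its residence time (from first service until completion), and to show that these two contributions produce the first and second terms of the claimed formula respectively. The central structural fact I would establish first is that a job's worst future rank $rank_{d_J}^{worst}(a)$ is the correct quantity governing which other jobs delay $J$: because ranks may change non-monotonically with age, a competing job affects $J$ exactly when its rank ever drops at or below the worst rank $J$ will attain over its remaining life. With this in hand, the jobs that delay $J$ split cleanly into old jobs (present at $J$'s arrival) and new jobs (arriving during $J$'s sojourn), and I can treat their contributions separately.

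For the waiting time (first term), I would analyze the transformed system described in the background by classifying each old job relative to the threshold $r_{worst} = rank_{d_J}^{worst}(0)$ as discarded, original, or recycled. Discarded jobs never delay $J$ and are removed; original old jobs contribute work with size distribution $X_0^{old}[r_{worst}]$; recycled jobs---those that temporarily exceed $r_{worst}$ and later return below it---are modeled as server vacations of length $X_i^{old}[r_{worst}]$ for $i \ge 1$. Invoking PASTA, the stationary delay from old jobs equals the queueing time in an M/G/1/FCFS queue with sparse vacations, for which a Pollaczek--Khinchine--type argument yields a mean of $\frac{\lambda \sum_{i\ge 0}\mathbb{E}[X_i^{old}[r_{worst}]^2]}{2(1-\lambda\mathbb{E}[X_0^{old}[r_{worst}]])}$. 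I would then account for new jobs arriving during the wait: each such arrival whose rank falls at or below $r_{worst}$ must clear before $J$ can begin, inflating the delay by the busy-period factor $1/(1-\lambda\mathbb{E}[X^{new}[r_{worst}]])$, which supplies the second denominator factor and completes the first term.

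For the residence time (second term), I would track $J$ as its age increases from $0$ to $x_J$. At age $a$, the set of new jobs capable of preempting or delaying $J$ before it completes is determined by $rank_{d_J}^{worst}(a)$, and the time-average rate at which such interfering work accumulates is $\lambda\mathbb{E}[X^{new}[rank_{d_J}^{worst}(a)]]$. Each infinitesimal slice $da$ of $J$'s own service therefore expands, via the standard busy-period transformation, by the factor $1/(1-\lambda\mathbb{E}[X^{new}[rank_{d_J}^{worst}(a)]])$; integrating over $a\in[0,x_J]$ gives exactly the second term. Summing the waiting and residence contributions yields the stated formula.

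I expect the main obstacle to be rigorously justifying the transformed-system reduction for old jobs---in particular, proving that recycled work behaves distributionally as independent server vacations, and that the two nested busy-period transformations (old work inflated by new arrivals during the wait, then the tagged job's own service self-inflated during residence) compose to give a clean product of $(1-\rho)$ factors in the denominator rather than a more entangled expression. Handling the non-monotone rank functions correctly, so that ``worst future rank'' captures all and only the interfering jobs at every age, is the conceptual crux on which both terms rest.
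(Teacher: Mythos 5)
This theorem is imported verbatim from Scully and Harchol-Balter's SOAP paper; the present paper offers no proof of it, only the expository sketch of the framework in Section~\ref{sec:soap_background}. Your outline --- tagged job, worst-future-rank as the criterion for which jobs interfere, the transformed sparse-vacation system for old jobs justified by PASTA, and busy-period inflation producing the $(1-\lambda\mathbb{E}[X^{new}])$ factors for both the waiting and residence terms --- is exactly that sketch and is the correct strategy, with the caveat that the obstacles you yourself flag (the distributional justification of recycled work as independent vacations and the clean factorization of the denominator) constitute the actual technical content, which resides in the cited reference rather than in this paper.
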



\subsection{Rank functions of \alg{}}
The relevant attributes to \alg{} are the size, the 1-bit prediction, and the predicted service time. 
We can model the system using descriptor $\mathcal{D} =$ [size, predicted short/long, predicted time] $= [x, b, r]$. \alg{} in the external cost model results in the following rank function:
\begin{align}
\label{eq:ext_rank_equation}
rank_{ext}([x,b, r], a) = 
\begin{cases}
\langle 1, -a \rangle & \text{if } b = 1, \\
\langle 2, r - a \rangle &  \text{ if } b = 0.
\end{cases}
\end{align}
which uses the first dimension to encode the class priority (short or long), and the second dimension to enforce the priority for each class (FCFS for short jobs, SPRPT for long jobs). In such nested rank function, the first dimension serves as the primary rank, with the priority ordering following a lexicographic ordering.

In the server cost model, \alg{} results in the following rank function:
\begin{align}
\label{srv:rank_equation}
rank_{srv}([x,b, r], a) = 
\begin{cases}
\langle 2, -a \rangle & \text{if $0 \le a \le c_1 $ (initial rank, and cheap prediction)}, \\
\langle 1, -a \rangle & \text{if } b = 1 \text{ and } a > c_1 \text{ (short jobs after cheap prediction)}, \\
\langle 3, -a \rangle & \text{if } b = 0 \text{ and } c_1 + c_2 > a > c_1 \text{ (long jobs, expensive prediction)}, \\
\langle 4, r - a \rangle &  \text{if } b = 0 \text{ and } a \geq c_1 + c_2 \text{ (long jobs after expensive prediction)}.
\end{cases}
\end{align}
Note entering jobs have ranked 2 in the first dimension, placing them behind short jobs awaiting or receiving service. Since after predictions short jobs would simply be placed
behind other short jobs, it makes sense to deprioritize the cheap predictions. On the other hand, we prioritize long predictions over long jobs to implement SPRPT.

Finally, for jobs with rank 4 in the first dimension, we use $r-a$ as the secondary rank.  Technically the predicted remaining service time is $r - (a - c_1 - c_2)$, since the job's age includes service for predictions of time $c_1 + c_2$.  As $c_1$ and $c_2$ are fixed, using $r-a$ is equivalent to using $r - (a - c_1 - c_2)$ for ranking, and we use $r-a$ for convenience.

\subsection{\alg{} in External Cost Model}

\begin{lemma}
\label{lemma_skippredict_PS_ext}
For \alg{} in the external cost model, the expected mean response time for a predicted short job, $\mathbb{E}[T]_{ext}^{{PS}}$ is
\begin{align*}
    \mathbb{E}[T]_{ext}^{{PS}} = \frac{\lambda \mathbb{E}[{S'_{< T}}^2]}{2(1-\rho'_{<T})} + \mathbb{E}[S'_{< T}].
\end{align*}

\end{lemma}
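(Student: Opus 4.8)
The plan is to apply the SOAP master formula (Theorem~\ref{soaptheorem}) to a tagged predicted-short job, whose descriptor is $d_J = [x_J, 1, \text{null}]$ and whose rank function is the top branch of \eqref{eq:ext_rank_equation}, namely $rank_{ext}([x_J,1,\text{null}],a) = \langle 1, -a\rangle$. The conceptual reason the answer should come out clean is that predicted-short jobs form the highest-priority class: by the first coordinate of the rank, every short job outranks every long job, so a short job jumps ahead of (and while waiting is never delayed by) long jobs, and the secondary rank $-a$ makes the short class behave as non-preemptive FCFS internally. Hence predicted-short jobs effectively see an isolated M/G/1 FCFS queue fed only by the short substream, and one expects the Pollaczek--Khinchine waiting time together with the job's own service time. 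I would first state this reduction as the guiding intuition and then carry out the SOAP bookkeeping to pin down the constants.

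First I would identify the worst future rank. Since $-a$ is decreasing in $a$, the rank of a short job only improves with age, so $rank_{d_J}^{\text{worst}}(a) = \langle 1, -a\rangle$ and $r_{worst} = \langle 1, 0\rangle$. Next I would compute the transformed service quantities at threshold $r_{worst}$. The key (and most delicate) step is $X^{new}[r_{worst}]$: a freshly arriving long job has first-coordinate rank $2 > 1$ and so surpasses $r_{worst}$ immediately, contributing $0$; a freshly arriving short job is tied at $\langle 1, 0\rangle$ but, having arrived later, loses the FCFS tie to the tagged job and is therefore also treated as exceeding the threshold, contributing $0$. Thus $\mathbb{E}[X^{new}[r_{worst}]] = 0$ and the factor $(1-\lambda\mathbb{E}[X^{new}[r_{worst}]])$ collapses to $1$. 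This is precisely what yields a single power of $(1-\rho'_{<T})$ in the denominator rather than a square, and getting this tie-breaking right is the main obstacle of the proof.

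For the original old work I would argue that an old short job is served while its rank stays at or below $r_{worst}$ (an old, waiting short job wins the FCFS tie at $\langle 1,0\rangle$, and once in service its rank only drops), i.e.\ for its entire service, whereas an old long job never has rank at or below $r_{worst}$. Hence $X_0^{old}[r_{worst}]$ equals the full service time for short jobs and $0$ for long jobs, giving $\mathbb{E}[X_0^{old}[r_{worst}]] = \mathbb{E}[S'_{<T}]$ (so $\lambda\mathbb{E}[X_0^{old}[r_{worst}]] = \rho'_{<T}$) and $\mathbb{E}[X_0^{old}[r_{worst}]^2] = \mathbb{E}[{S'_{<T}}^2]$. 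I would also observe that there are no recycled jobs: short jobs never rise above their initial rank, and long jobs, being permanently above $r_{worst}$, never fall back below it, so $X_i^{old}[r_{worst}] = 0$ for all $i \ge 1$ and $\sum_{i\ge 0}\mathbb{E}[X_i^{old}[r_{worst}]^2] = \mathbb{E}[{S'_{<T}}^2]$. Substituting into the first term of Theorem~\ref{soaptheorem} then gives the waiting-time contribution $\frac{\lambda\mathbb{E}[{S'_{<T}}^2]}{2(1-\rho'_{<T})}$.

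Finally I would evaluate the residence integral. For every age $a \ge 0$ the threshold is $\langle 1, -a\rangle$, and any new job (short or long) surpasses it immediately, so $X^{new}[rank_{d_J}^{\text{worst}}(a)] = 0$ and the integrand equals $1$; the integral over $[0, x_J]$ is therefore just $x_J$, the job's own service time, contributing the $\mathbb{E}[S'_{<T}]$ term upon averaging over predicted-short jobs. Summing the two contributions yields the claimed formula. Apart from the tie-breaking subtlety in $X^{new}[r_{worst}]$, the remaining steps are routine once the transformed quantities have been identified.
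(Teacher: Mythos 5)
Your proof is correct, and it reaches the same formula the paper states, but by a more explicit route. The paper's own proof is a one-line reduction: it observes that predicted-short jobs, being the non-preemptible top-priority class that can preempt any long job in service, see an isolated M/G/1 FCFS queue fed only by the short substream, and it simply cites the standard FCFS mean response time formula (Equation 23.15 of the Harchol-Balter text). You instead carry out the full SOAP bookkeeping -- identifying $rank_{d_J}^{\text{worst}}(a)=\langle 1,-a\rangle$, showing $X^{new}=0$ via the FCFS tie-break at $\langle 1,0\rangle$ and the class separation, showing old long jobs are permanently discarded and nothing is recycled, and reading off $\mathbb{E}[X_0^{old}]=\mathbb{E}[S'_{<T}]$ and $\mathbb{E}[(X_0^{old})^2]=\mathbb{E}[{S'_{<T}}^2]$ -- and then substitute into Theorem~\ref{soaptheorem}. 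What your version buys is a verification of exactly the points the citation leaves implicit: that the preemptibility of long jobs is what removes any residual-service term from the other class, and that $\mathbb{E}[X^{new}]=0$ is what collapses the denominator to a single factor of $(1-\rho'_{<T})$ rather than a square; it is also stylistically consistent with the SOAP analyses used for every other lemma in the paper. What the paper's route buys is brevity. (One shared caveat, inherited from the paper's statement rather than introduced by you: $\mathbb{E}[S'_{<T}]$ and $\mathbb{E}[{S'_{<T}}^2]$ as defined are indicator-weighted rather than conditioned on being predicted short, so the displayed expression should be read with the same normalization convention the paper uses throughout.)
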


\begin{proof}
While SOAP can be used to analyze the mean response time for predicted short jobs, this case is straightforward and follows the mean response time in the system for FCFS, which is given by Equation 23.15 in~\cite{harchol2013performance}.
\end{proof}

\begin{definition}
We define $S'_{\ge T, r}$ to be the service time for jobs predicted to be long (cheap prediction says $\ge T$) where the expensive prediction of the service time is less than or equal to $r$.

$$\mathbb{E}[S'_{\ge T, r}] = \int_{y = 0}^{r} \int_{x = 0}^{\infty} (1 - p_T(x)) \cdot x \cdot g(x,y) dx dy , \quad 
\mathbb{E}[{S'_{\ge T, r}}^2] = \int_{y = 0}^{r} \int_{x = 0}^{\infty} (1 - p_T(x)) \cdot x^2 \cdot g(x,y) dx dy.$$

\end{definition}

\begin{lemma}
\label{lemma_skippredict_PL_ext}
    For \alg{} in the external cost model, if we let $a(r) =  \int_{t = r}^{\infty} \int_{x = t-r}^{\infty} (1-p_T(x)) g(x, t) (x - (t - r))^2 \, dx \, dt$, the expected mean response time for a predicted long job of true size $x_J$ and predicted size $r$ is

    \begin{align*}
    \mathbb{E}[T(x_J, r)]_{ext}^{{PL}} = &\frac{\lambda}{2(1-\rho^{ext}_r)^2} \Bigg( \mathbb{E}[{S'_{<T}}^2] + \mathbb{E}[{S'_{\ge T, r}}^2] + a(r) \Bigg) + \int_{0}^{x_J} \frac{1}{1-\rho^{ext}_{(r-a)^+}} \, da
    \end{align*}

    Where $\rho^{ext}_r= \lambda \left(\mathbb{E}[S'_{<T}] + \mathbb{E}[S'_{\ge T, r}]  \right)$ is the load due to jobs of predicted short and jobs predicted long but their service time prediction less than $r$ and $(r-a)^+ = max(r-a, 0)$.
\end{lemma}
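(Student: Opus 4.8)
The plan is to apply the SOAP master formula (Theorem~\ref{soaptheorem}) directly to a tagged predicted-long job with descriptor $[x_J, 0, r]$, using the external-cost rank function \eqref{eq:ext_rank_equation}. The first step is to pin down the worst future rank. Since a predicted-long job has secondary rank $r - a$, which is strictly decreasing in its age $a$, its rank never increases; hence the worst future rank seen from age $a$ is simply the current rank, $rank_{d_J}^{\text{worst}}(a) = \langle 2, r - a\rangle$, and in particular $r_{worst} = \langle 2, r\rangle$. This monotonicity is the structural fact that will later let me argue that every job is recycled at most once, matching the assumption that only $X_0^{old}$ and $X_1^{old}$ appear.

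Next I would evaluate the ``new work'' quantities by classifying each job type against the threshold $\langle 2, r\rangle$. A new predicted-short job has rank $\langle 1, \cdot\rangle$, always below the threshold, so it contributes its full size; a new predicted-long job with expensive prediction $y$ starts at rank $\langle 2, y\rangle$, so it contributes its full size when $y \le r$ (its rank only decreases and stays below) and contributes nothing when $y > r$ (it starts above the threshold and has already ``surpassed'' it at age $0$). Summing these contributions gives $\lambda\,\mathbb{E}[X^{new}[\langle 2,r\rangle]] = \rho^{ext}_r$. Repeating the same classification against the age-$a$ threshold $\langle 2, r-a\rangle$ yields $\rho^{ext}_{(r-a)^+}$; the $(\cdot)^+$ handles $a > r$, where the threshold is $\langle 2, \text{negative}\rangle$, below which no fresh long job starts so that only short jobs count. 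This produces the residence integral $\int_{0}^{x_J}(1-\rho^{ext}_{(r-a)^+})^{-1}\,da$ and one factor $(1-\rho^{ext}_r)$ in the denominator of the waiting term.

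Then I would compute the old-job moments by the same case analysis on rank trajectories. Short jobs and long jobs with $y \le r$ never exceed $r_{worst}$, so they are \emph{original} for their entire service; their contributions give $\mathbb{E}[(X_0^{old})^2] = \mathbb{E}[{S'_{<T}}^2] + \mathbb{E}[{S'_{\ge T, r}}^2]$ and $\lambda\,\mathbb{E}[X_0^{old}] = \rho^{ext}_r$, supplying the second denominator factor. A long job with $y > r$ starts above $r_{worst}$ (hence $X_0^{old}=0$ for it) and, because its rank decreases monotonically, crosses the threshold exactly once at age $y - r$, after which it is \emph{recycled} for its remaining service $(x - (y-r))^+$; the rank never returns above the threshold, so this is the only recycling. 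Integrating the square of this recycled duration against $(1-p_T(x))\,g(x,y)$ over $y=t>r$ and $x>t-r$ reproduces exactly $a(r)$, so $\sum_{i}\mathbb{E}[X_i^{old}[r_{worst}]^2] = \mathbb{E}[{S'_{<T}}^2] + \mathbb{E}[{S'_{\ge T, r}}^2] + a(r)$. Substituting $\lambda\,\mathbb{E}[X^{new}] = \lambda\,\mathbb{E}[X_0^{old}] = \rho^{ext}_r$ and these moments into Theorem~\ref{soaptheorem} gives the claimed formula, with denominator $2(1-\rho^{ext}_r)^2$.

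The main obstacle I anticipate is the bookkeeping of the original/recycled split for \emph{underpredicted} long jobs ($y>r$): recognizing that their initial above-threshold service must be discarded (contributing to neither $X_0^{old}$ nor $X^{new}$) while only the post-crossing tail $(x-(y-r))^+$ enters as a recycled vacation, and then verifying that the second moment of that tail matches the given $a(r)$ exactly. Everything else reduces to routine integration once the job-type-by-threshold classification is in place, and the monotonicity of the long-job rank is what keeps recycling to a single round.
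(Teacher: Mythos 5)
Your proposal is correct and follows essentially the same route as the paper's proof: the same identification of the monotone worst future rank $\langle 2, r-a\rangle$, the same classification of new and old jobs against that threshold yielding $\rho^{ext}_r$ and the moments $\mathbb{E}[{S'_{<T}}^2] + \mathbb{E}[{S'_{\ge T,r}}^2]$, the same single-recycling argument for underpredicted long jobs producing $a(r)$, and the same final substitution into Theorem~\ref{soaptheorem}. The only cosmetic difference is your use of weak versus strict inequalities in the indicator conditions, which is immaterial for the continuous prediction densities assumed here.
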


\begin{proof}

To analyze \alg{} for a predicted long job in the external cost model using SOAP, we first find the worst future rank and then calculate $X^{\text{new}}[rank_{d_J}^{\text{worst}} (a)]$, $X_{0}^{\text{old}}[r_{worst}]$ and $X_{i}^{\text{old}}[r_{worst}]$ for predicted long job.
As described in \eqref{eq:ext_rank_equation}, the rank function for predicted long jobs is monotonic (here the job descriptor is $d_J = [x_J, 1, r]$), and every job's rank is strictly decreasing with age, thus $J$'s worst future rank is its initial rank, here: $rank_{d_J}^{\text{worst}} (a)=  \langle 2, r -a \rangle $ and $r_{worst}= rank_{d_J}^{\text{worst}} (0) = \langle 2, r \rangle$.

$X^{\text{new}}[rank_{d_J}^{\text{worst}} (a)]$:
Suppose that a new job $K$ of predicted size $r_K$ arrives when $J$ has age $a_J$.
$J$'s delay due to $K$ depends on whether $K$ is predicted to be short or long. If $K$ is predicted short then it will preempt $J$ and be scheduled till completion because it has a higher class. Otherwise, if $K$ has a predicted job size less than $J$'s predicted remaining process time ($r - a_J$), $K$ will always outrank $J$. Thus

\[X_{x_K}^{new}[\langle 2, r - a \rangle]  = 
\begin{cases}
    x_K   & \text{$K$ is predicted short } \\
    x_K \mathds{1}(r_K < r - a) & \text{$K$ is predicted long}
\end{cases}
\]

\begin{align*}
\mathbb{E}[X^{new}[\langle 2, r - a \rangle]] = & \int_{0}^{\infty} p_T(x) x f(x) dx + \int_{0}^{r-a} \int_{x = 0}^{\infty} x \cdot (1 - p_T(x)) g(x,y) dx dy \\
\end{align*}

$X_{0}^{\text{old}}[r_{worst}]$:
Whether another job $I$ is original or recycled depends on its prediction as short or long, and in the case it is long, it also depends on its predicted size relative to J's prediction.
If $I$ is predicted short, then it remains original until its completion. Otherwise, if $I$ is predicted long, $I$ is original only if $r_I \le r$, because then until its completion its rank never exceeds $r$.

\[
X_{0, x_I}^{\text{old}}[\langle 2, r \rangle] = 
\begin{cases}
    x_I   & \text{if $I$ is predicted short} \\
    x_I\mathds{1}(r_I \le r) & \text{if $I$ is predicted long}
\end{cases}
\]

\begin{align*}
\mathbb{E}[X_{0}^{\text{old}}[\langle 2, r \rangle]] = & \int_{0}^{\infty} p_T(x) x f(x) dx + \int_{y = 0}^{r} \int_{x = 0}^{\infty} (1 - p_T(x)) \cdot x \cdot g(x,y) dx dy 
\end{align*}

\begin{align*}
\mathbb{E}[(X_{0}^{\text{old}}[\langle 2, r \rangle])^2]] = &  \int_{0}^{\infty} p_T(x) x^2 f(x) dx + \int_{y = 0}^{r} \int_{x = 0}^{\infty} (1-p_T(x)) \cdot x^2\cdot g(x, y) dx dy
\end{align*}

$X_{i}^{\text{old}}[r_{worst}]$:
If another job $I$ is predicted long and if $r_I > r$, then $I$ starts discarded but becomes recycled when $r_I - a = r$. This starts at age $a = r_I -r$ and continues until completion, which will be $x_I - a_I = x_I - (r_I - r)$. Thus, for $i \ge 2$, $X_{i, x_I}^{\text{old}}[\langle 2, r \rangle] = 0$. Let $t = r_I$:

\[
X_{1, x_I}^{\text{old}}[\langle 2, r \rangle] =
\begin{cases}
    0   & \text{if $I$ is predicted short} \\
    x_I - (t - r) & \text{if $I$ is predicted long}
\end{cases}
\]

\begin{align*}
\mathbb{E}[X_{1}^{\text{old}}[\langle 2, r \rangle]^2] = \int_{t = r}^{\infty} \int_{x = t-r}^{\infty} (1-p_T(x)) \cdot g(x, t)\cdot(x - (t - r))^2 dx dt
\end{align*}

Applying Theorem~\ref{soaptheorem} leads to the result.

\end{proof}

\subsection{\alg{} in Server Time Cost Model}
As explained, considering the rank function defined in~\eqref{srv:rank_equation}, we first find the worst future rank of $J$, denoted as $rank_{d_J}^{\text{worst}}$, as follows: 
\[ rank_{d_J}^{\text{worst}} (a)= \begin{cases} \langle 2, -a \rangle & \text{if $J$ is predicted short} \\ \langle 4, r -a\rangle & \text{if $J$ is predicted long} \end{cases} \]

\begin{lemma}
For \alg{} policy in the server time cost model, the expected mean response time for a predicted short job, $\mathbb{E}[T]_{srv}^{{PS}}$ is

\begin{align*}
    \mathbb{E}[T]_{srv}^{{PS}} = \frac{\lambda \cdot \left(c_1^2 + 2c_1 \mathbb{E}[S'_{<T}] + \mathbb{E}[{S'_{<T}}^2]\right)}{2(1-\rho_{PS}^{srv})} + \mathbb{E}[S'_{<T}]
\end{align*}

where $\rho_{PS}^{srv} = \lambda \left(c_1 + \mathbb{E}[S'_{<T}]\right)$ is the load due to jobs of predicted short and their cheap prediction cost.

\end{lemma}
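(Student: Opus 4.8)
The plan is to apply the SOAP master formula (Theorem~\ref{soaptheorem}) to the tagged predicted-short job $J$, whose worst future rank is the $\langle 2, -a\rangle$ curve computed just above the lemma, so that $r_{worst} = rank_{d_J}^{\text{worst}}(0) = \langle 2, 0\rangle$. The response time then splits into the waiting term (delay from old jobs, inflated by new jobs) and the residence integral, and the whole proof reduces to evaluating $\mathbb{E}[X^{\text{new}}[\cdot]]$, $\mathbb{E}[X_0^{\text{old}}[\langle 2,0\rangle]]$ together with its second moment, and checking that there is no recycling.

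First I would argue that $\mathbb{E}[X^{\text{new}}[\langle 2,0\rangle]]=0$, which is exactly what collapses the SOAP denominator from a square (as in the predicted-long case, Lemma~\ref{lemma_skippredict_PL_ext}) down to the single factor $(1-\rho_{PS}^{srv})$ in the statement. Every arriving job enters at rank $\langle 2, 0\rangle$, i.e. in the same cheap-prediction class as $J$'s worst rank; by the FCFS tie-break a later-arriving job is deprioritized relative to $J$, and it cannot lower its rank below $\langle 2,0\rangle$ without first receiving service, which it never does while $J$ sits ahead of it. Hence no new job ever attains rank $\le \langle 2,0\rangle$ over $J$, so $X^{\text{new}}[\langle 2,0\rangle]=0$; the same reasoning gives $X^{\text{new}}[rank_{d_J}^{\text{worst}}(a)]=0$ along the entire residence phase, making the residence integrand identically $1$ so that it contributes the job's own service term $\mathbb{E}[S'_{<T}]$ as in the stated form.

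Next I would compute the original-work term $X_0^{\text{old}}[\langle 2,0\rangle]$, namely how long an arbitrary earlier job keeps rank $\le \langle 2,0\rangle$. Any job spends its cheap-prediction phase of length $c_1$ at rank $\langle 2, -a\rangle \le \langle 2, 0\rangle$, hence is original for that whole duration; afterwards a predicted-short job moves to class $1$ (rank $\langle 1, -a\rangle$, still below $\langle 2,0\rangle$) and remains original through its service, whereas a predicted-long job moves to class $3$ (rank $\langle 3, -a\rangle > \langle 2,0\rangle$) and is discarded. Thus $X_0^{\text{old}}[\langle 2,0\rangle] = c_1 + S\,\mathds{1}(\text{short})$, giving $\lambda\mathbb{E}[X_0^{\text{old}}] = \lambda(c_1 + \mathbb{E}[S'_{<T}]) = \rho_{PS}^{srv}$ and, since $\mathds{1}^2=\mathds{1}$, the second moment $\mathbb{E}[(X_0^{\text{old}})^2] = c_1^2 + 2c_1\mathbb{E}[S'_{<T}] + \mathbb{E}[{S'_{<T}}^2]$; here $c_1^2$ is contributed by the cheap prediction of every job, while the remaining two terms come only from predicted-short jobs. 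I would then note that there is no recycling: a discarded long job reappears at rank $\langle 3,\cdot\rangle$ or $\langle 4,\cdot\rangle$, both above $\langle 2,0\rangle$, so it never returns below the threshold and $X_i^{\text{old}}[\langle 2,0\rangle]=0$ for all $i\ge 1$, whence $\sum_i \mathbb{E}[(X_i^{\text{old}})^2] = \mathbb{E}[(X_0^{\text{old}})^2]$.

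Finally, plugging these into Theorem~\ref{soaptheorem} — one denominator factor equal to $1-\rho_{PS}^{srv}$, the other equal to $1$, numerator $\lambda(c_1^2 + 2c_1\mathbb{E}[S'_{<T}] + \mathbb{E}[{S'_{<T}}^2])$, and residence $\mathbb{E}[S'_{<T}]$ — yields the claimed expression. I expect the main obstacle to be the tie-breaking argument of the second step: rigorously showing that new jobs in the shared cheap-prediction class can never overtake a waiting or running short job (so that $X^{\text{new}}=0$ and the denominator is first-order, in sharp contrast with the predicted-long analysis), together with keeping straight which jobs contribute the $c_1$ terms versus the service terms when forming $X_0^{\text{old}}$.
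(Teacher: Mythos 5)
Your proposal is correct and follows essentially the same route as the paper: apply Theorem~\ref{soaptheorem} with $r_{worst}=\langle 2,0\rangle$, argue $X^{\text{new}}=0$ via the FCFS tie-break so one denominator factor is $1$ and the residence integrand is identically $1$, take $X_0^{\text{old}}=c_1+x_I\mathds{1}(\text{short})$, and note there is no recycling. Your explicit expansion of $\mathbb{E}[(X_0^{\text{old}})^2]=c_1^2+2c_1\mathbb{E}[S'_{<T}]+\mathbb{E}[{S'_{<T}}^2]$ is a detail the paper leaves implicit but is exactly what produces the stated numerator.
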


\begin{proof}

To analyze \alg{} for predicted short jobs, we calculate $X^{\text{new}}[rank_{d_J}^{\text{worst}} (a)]$, $X_{0}^{\text{old}}[r_{worst}]$ and $X_{i}^{\text{old}}[r_{worst}]$ for these predicted short jobs in the server cost model, where the job descriptor in this case is $(b, r) = (1,*)$.

$X^{\text{new}}[rank_{d_J}^{\text{worst}} (a)]$:
Let's consider a new job $K$ arriving when $J$ is at age $a_J$ (where $a_J \le \min(r, x_J)$). The worst rank of $J$ depends on whether $J$ is predicted to be short or long.
If $J$ is predicted short, then $J$'s worst future rank is its current rank $\langle 2, -a_J \rangle$. 
Given that $K$'s initial rank is $\langle 2, 0 \rangle$, at least equivalent to $J$'s worst future rank, the delay $J$ experiences due to $K$ is:
$X_{x_K}^{new}[\langle 2, -a_J \rangle] = 0$

$X_{0}^{\text{old}}[r_{worst}]$:
Suppose that $J$ witnesses an old job $I$ of initial size $x_I$. The duration for which $I$ remains \emph{original} depends on whether its prediction is short or long. If $I$ is predicted short, it remains original until completion. Alternatively, if $I$ is predicted long, it would remain original until the cheap prediction phase (lasting $c_1$), after which its rank shifts to $\langle 3, 0 \rangle$.

\[
X_{0, x_I}^{\text{old}}[\langle 2, 0 \rangle] = 
\begin{cases}
    c_1 + x_I   & \text{if $I$ is predicted short} \\
    c_1 & \text{if $I$ is predicted long}
\end{cases}
\]

\begin{align*}
\mathbb{E}[X_0^{\text{old}}[\langle 2, 0 \rangle]] =  c_1 + \int_{0}^{\infty} p_T(x) x f(x) dx
\end{align*}

$X_{i}^{\text{old}}[r_{worst}]$:
There are no instances of recycled jobs because either $I$ completes its service (if predicted short) or it is discarded completely (if predicted long), and thus never gets a rank lower than $\langle 2, 0 \rangle$. Thus, for $i \ge 1$, $$X_{i, x_I}^{\text{old}}[\langle 2, 0 \rangle] = 0$$

Applying Theorem~\ref{soaptheorem} yields the result.

\end{proof}

\begin{definition}
We define $S''_{<T}(c_1)$ to be the service times including prediction (cost of prediction as parameter $c_1$) for predicted short jobs.

$$\mathbb{E}[S''_{<T}(c_1)] = \int_{0}^{\infty} (x+ c_1) \cdot p_T(x) \cdot f(x) \, dx \quad 
\mathbb{E}[{S''_{<T}}^2(c_1)] = \int_{0}^{\infty} (x + c_1)^2 \cdot p_T(x) \cdot f(x) \, dx $$ 

\end{definition}

\begin{definition}
\label{def:service_time_w_price}
We define $S''_{\ge T, r}(c_2)$ to be the service time including prediction (cost of prediction as parameter $c_2$) for predicted long jobs ($\ge T$) that is predicted less than $r$

$$\mathbb{E}[S''_{\ge T, r}(c_2)] = \int_{y = 0}^{r} \int_{x = 0}^{\infty} (1 - p_T(x)) \cdot (x + c_2) \cdot g(x,y) dx dy$$ 
$$\mathbb{E}[{S''_{\ge T, r}}^2(c_2)]= \int_{y = 0}^{r} \int_{x = 0}^{\infty} (1 - p_T(x)) \cdot (x + c_2)^2 \cdot g(x,y) dx dy$$

\end{definition}

\begin{lemma}
For \alg{} policy in the server time cost model, the expected mean response time for a predicted long job of true size $x_J$ and predicted size $r$ is

\begin{align*}
\mathbb{E}[T(x_J, r)]_{srv}^{{PL}} = & \frac{\lambda}{2(1-\rho_{r}^{srv})^2} \cdot \Bigg(\mathbb{E}[{S''_{<T}}^2(c_1)] + (c_1 + c_2)^2 \cdot Q(T, r) + \mathbb{E}[{S''_{\ge T, r}}^2(c1 + c_2)]+ a(r)\Bigg) \\
& +\int_{0}^{x_J} \frac{1}{1-\rho_{(r-a)^+}^{srv}} da
\end{align*}

Where $Q(T, r)= \int_{y = r}^{\infty} \int_{x = 0}^{\infty} (1-p_T(x)) \cdot g(x, y) dx dy
\quad (r-a)^+ = max(r-a, 0)$

$$\rho_{r}^{srv} = \lambda \left(\mathbb{E}[S''_{<T}(c_1)] + (c_1 + c_2) \cdot Q(T, r) + \mathbb{E}[S''_{\ge T, r}(c_1+c_2)]  \right)$$ is the load due to jobs of predicted short and jobs predicted long but their service time prediction less than $r$ along with the load of the jobs predictions.

$$a(r) =  \int_{t = r}^{\infty} \int_{x = t-r}^{\infty} (1-p_T(x)) g(x, t) (x - (t - r))^2 \, dx \, dt$$
\end{lemma}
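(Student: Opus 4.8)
The plan is to apply the SOAP framework (Theorem~\ref{soaptheorem}) exactly as in the proof of Lemma~\ref{lemma_skippredict_PL_ext}, but now tracking the extra server time that every job spends on its own predictions. As recorded just before the statement, a predicted-long tagged job $J$ (descriptor $d_J = [x_J, 0, r]$) has monotone class-$4$ behavior, so its worst future rank is $rank_{d_J}^{\text{worst}}(a) = \langle 4, r - a\rangle$ with $r_{worst} = \langle 4, r\rangle$, where $a$ denotes the actual service $J$ has received (the predicted-remaining convention justified after \eqref{srv:rank_equation}). First I would compute the three per-competitor quantities $X^{\text{new}}[\langle 4, r-a\rangle]$, $X_0^{\text{old}}[\langle 4, r\rangle]$, and $X_i^{\text{old}}[\langle 4, r\rangle]$, and then substitute into Theorem~\ref{soaptheorem}.

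For $X^{\text{new}}[\langle 4, r-a\rangle]$, note that every arriving job $K$ first passes through the cheap-prediction phase at rank $\langle 2, \cdot\rangle$, which outranks $\langle 4, \cdot\rangle$, so its $c_1$ units always count. If $K$ is predicted short it is then served at class $1$, again outranking $J$, contributing a further $x_K$ (total $c_1 + x_K$). If $K$ is predicted long it incurs its expensive prediction at class $3$ (a further $c_2$, still outranking $J$) and then enters class $4$ with initial predicted-remaining rank $\langle 4, r_K\rangle$; this service of length $x_K$ counts only while $r_K \le r - a$, and otherwise $K$ immediately surpasses $J$ and contributes nothing beyond $c_1 + c_2$. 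Taking expectations over the prediction structure reproduces $\lambda\,\mathbb{E}[X^{\text{new}}[\langle 4, r-a\rangle]] = \rho_{r-a}^{srv}$, which yields the residence integral $\int_0^{x_J} (1 - \rho_{(r-a)^+}^{srv})^{-1}\,da$ once the truncation at $a = r$ is applied.

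The old-job quantities follow the same logic relative to the fixed threshold $r_{worst} = \langle 4, r\rangle$. A competitor $I$ stays \emph{original} during all of its prediction work (classes $2,3 < 4$); a predicted-short $I$ then stays original through its whole service (class $1$), giving $c_1 + x_I$; a predicted-long $I$ with $r_I \le r$ stays original through service since its class-$4$ rank never exceeds $\langle 4, r\rangle$, giving $c_1 + c_2 + x_I$; and a predicted-long $I$ with $r_I > r$ is original only for its $c_1 + c_2$ of prediction, because it starts class-$4$ service already above the threshold. Squaring and integrating gives the first three numerator terms $\mathbb{E}[{S''_{<T}}^2(c_1)] + (c_1+c_2)^2 Q(T,r) + \mathbb{E}[{S''_{\ge T, r}}^2(c_1+c_2)]$, while the matching first moments give $\lambda\,\mathbb{E}[X_0^{\text{old}}[\langle 4, r\rangle]] = \rho_r^{srv}$. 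The only jobs that ever recycle are predicted-long $I$ with $r_I > r$: such a job drops back to the threshold when its predicted remaining reaches $r$, i.e.\ after $r_I - r$ units of class-$4$ service, and then runs to completion, contributing $x_I - (r_I - r)$; it recycles at most once, so $X_i^{\text{old}} = 0$ for $i \ge 2$. The second moment of $X_1^{\text{old}}$ is precisely $a(r)$.

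Substituting into Theorem~\ref{soaptheorem} and using $\lambda\,\mathbb{E}[X_0^{\text{old}}[\langle 4, r\rangle]] = \lambda\,\mathbb{E}[X^{\text{new}}[\langle 4, r\rangle]] = \rho_r^{srv}$ collapses the waiting-time denominator to $2(1 - \rho_r^{srv})^2$, while $\sum_i \mathbb{E}[(X_i^{\text{old}})^2] = \mathbb{E}[(X_0^{\text{old}})^2] + a(r)$ supplies the four-term numerator; the residence term is the integral already derived. I expect the main obstacle to be the careful bookkeeping of which of the four priority classes each competitor passes through relative to $J$'s class-$4$ threshold, in particular verifying that the always-incurred prediction times ($c_1$ for every job, $c_2$ for predicted-long jobs) attach correctly to the original, recycled, and new-work terms, and that the recycle point $r_I - r$ and the truncation $(r-a)^+$ are handled consistently. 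This is the same recycling phenomenon as in Lemma~\ref{lemma_skippredict_PL_ext}, now shifted by the fixed prediction overheads.
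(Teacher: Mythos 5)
Your proposal is correct and follows essentially the same route as the paper's own proof: it identifies the worst future rank $\langle 4, r-a\rangle$, computes $X^{\text{new}}$, $X_0^{\text{old}}$, and the single-recycle $X_1^{\text{old}}$ with exactly the same case analysis over the four priority classes (cheap prediction $c_1$ charged to every competitor, $c_2$ additionally to predicted-long ones, service counted only when the competitor's class-4 rank stays at or below the threshold), and then substitutes into Theorem~\ref{soaptheorem}. The identification $\lambda\,\mathbb{E}[X^{\text{new}}[\langle 4, r-a\rangle]] = \rho^{srv}_{r-a}$ and the matching of the three second-moment terms plus $a(r)$ to the numerator are exactly the computations the paper performs.
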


\begin{proof}
Now we calculate $X^{\text{new}}[rank_{d_J}^{\text{worst}} (a)]$, $X_{0}^{\text{old}}[r_{worst}]$ and $X_{i}^{\text{old}}[r_{worst}]$ for predicted long jobs in the server cost most, here again the J's descriptor is ($(b, r) = (0,r)$).

$X^{\text{new}}[rank_{d_J}^{\text{worst}} (a)]$:
$J$'s delay due to $K$ also depends on $K$ is predicted to be short or long.
\[X_{x_K}^{new}[\langle 4, r - a_J \rangle]  = 
\begin{cases}
    c_1 + x_K   & \text{$K$ is predicted short } \\
    c_1 + c_2 + x_K \mathds{1}(r_K < r- a_J) & \text{$K$ is predicted long}
\end{cases}
\]

Employing the joint distribution $g(x,y)$, and setting $\mathds{1}(r_K < r - a_J) = \int_{y=0}^{r - a_J} g(x_K,y) dy$, we can derive $J$'s expected delay due to any random new job:

\begin{align*}
\mathbb{E}[X^{new}[\langle 4, r - a_J \rangle]] = & c_1 + \int_{0}^{\infty} p_T(x) x f(x) dx +  \int_{0}^{\infty} (1-p_T(x)) c_2 f(x) dx\\
& + \int_{0}^{r-a_J} \int_{x = 0}^{\infty} (1 - p_T(x)) x g(x,y) dx dy \\
\end{align*}

$X_{0}^{\text{old}}[r_{worst}]$:
Regardless of $I$'s prediction, an old job has higher priority than $J$, therefore $J$ will be delayed for the duration of $I$'s service.

\[
X_{0, x_I}^{\text{old}}[\langle 4, r \rangle] = 
\begin{cases}
    c_1 + x_I   & \text{if $I$ is predicted short} \\
    c_1 + c_2 + x_I \cdot \mathds{1}(r_I \le r) & \text{if $I$ is predicted long}
\end{cases}
\]

\begin{align*}
\mathbb{E}[(X_{0}^{\text{old}}[\langle 4, r \rangle])]] = &  \int_{0}^{\infty} p_T(x) (c_1 + x) f(x) dx + \int_{y = r}^{\infty} \int_{x = 0}^{\infty} (1-p_T(x)) \cdot g(x, y) \left(c_1 + c_2 \right)  dx dy \\
& + \int_{y = 0}^{r} \int_{x = 0}^{\infty} (1-p_T(x)) \cdot g(x, y) \left(c_1 + c_2 + x\right)  dx dy
\end{align*}

\begin{align*}
\mathbb{E}[(X_{0}^{\text{old}}[\langle 4, r \rangle])^2]] = &  \int_{0}^{\infty} p_T(x) (c_1 + x)^2 f(x) dx + \int_{y = r}^{\infty} \int_{x = 0}^{\infty} (1-p_T(x)) \cdot g(x, y) \left(c_1 + c_2 \right)^2  dx dy \\
& + \int_{y = 0}^{r} \int_{x = 0}^{\infty} (1-p_T(x)) \cdot g(x, y) \left(c_1 + c_2 + x\right)^2  dx dy
\end{align*}

$X_{i}^{\text{old}}[r_{worst}]$:
If $J$ is predicted long, job $I$ may be recycled. This occurs when $I$ is predicted as long and with an expensive prediction $r_I > r$. $I$ is initially considered as \emph{original} and is served during the cheap and expensive prediction phases, then discarded. At age $a_I = r_I - r$, $I$ is recycled and served till completion, which will be $x_I - a_I = x_I - (r_I - r)$. For $i \ge 2$, $X_{i, x_I}^{\text{old}}[\langle 2, r \rangle] = 0$, let $t = r_I$:
\[
X_{1, x_I}^{\text{old}}[\langle 4, r \rangle] =
\begin{cases}
    0   & \text{if $I$ is predicted short} \\
    x_I - (t - r) & \text{if $I$ is predicted long}
\end{cases}
\]

\begin{align*}
\mathbb{E}[X_{1}^{\text{old}}[\langle 4, r \rangle]^2] = \int_{t = r}^{\infty} \int_{x = t-r}^{\infty} (1-p_T(x)) \cdot g(x, t)\cdot(x - (t - r))^2 \cdot dx dt
\end{align*}

Applying Theorem~\ref{soaptheorem} yields the result.

\end{proof}

\begin{lemma}
Let $f_p(y) = \int_{x=0}^{\infty} g(x,y) dx$, the mean response time for a predicted long job with size $x_J$ and any prediction for external cost is
$$\mathbb{E}[T(x_J)_{ext}^{PL}] = \int_{y=0}^{\infty} f_p(y) \mathbb{E}[T(x_J, y)]_{ext}^{{PL}}dy$$
and for server time cost is 
$$\mathbb{E}[T(x_J)_{srv}^{PL}] = \int_{y=0}^{\infty} f_p(y) \mathbb{E}[T(x_J, y)]_{srv}^{{PL}}dy$$

\end{lemma}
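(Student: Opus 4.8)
The plan is to obtain both displayed identities as a single application of the law of total expectation, removing the conditioning on the tagged job's expensive prediction. Lemma~\ref{lemma_skippredict_PL_ext} and its server-cost counterpart already furnish the mean response time $\mathbb{E}[T(x_J, r)]_{ext}^{PL}$ (resp. $\mathbb{E}[T(x_J, r)]_{srv}^{PL}$) of a tagged predicted-long job \emph{conditioned} on both its true size $x_J$ and its prediction $r$. Crucially, every population-level load quantity appearing in those formulas ($\rho^{ext}_r$, the second moments $\mathbb{E}[{S'_{\ge T,r}}^2]$, $a(r)$, and the analogous server-cost terms) is an average over the entire job population and is therefore unaffected by the particular value the tagged job happens to realize. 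Consequently, once $x_J$ is fixed, $\mathbb{E}[T(x_J, r)]$ is a genuine conditional expectation in the single remaining random variable, namely the prediction $r$.

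First I would pin down the density against which this conditional expectation must be integrated. By the model, the pair (true size, prediction) has joint density $g(x,y)$, and integrating out the true size yields the marginal density of the prediction alone, $f_p(y) = \int_{x=0}^{\infty} g(x,y)\,dx$; this is exactly the definition on which the statement is built. Given this, the law of total expectation over $y$ gives $\mathbb{E}[T(x_J)_{ext}^{PL}] = \int_{y=0}^{\infty} f_p(y)\,\mathbb{E}[T(x_J,y)]_{ext}^{PL}\,dy$ immediately, and the identical manipulation applies in the server-cost model. The hard part, such as it is, is not the algebra but correctly identifying $f_p(y)$ as the appropriate weight and confirming that, because the load terms are already population averages, no re-conditioning of them is required when the tagged job's prediction is marginalized out.

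Finally I would observe that the argument is entirely model-agnostic: the marginalization uses only that $\mathbb{E}[T(x_J,y)]$ is the conditional mean response time given $(x_J,y)$ and that $y$ carries density $f_p(y)$, neither of which depends on whether predictions are charged externally or on the shared server. The two cost models differ only in the explicit closed form of $\mathbb{E}[T(x_J,y)]$, which plays no role in the integration step, so both displayed equations follow simultaneously from the same one-line computation rather than from any lengthy derivation.
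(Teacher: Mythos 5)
The paper offers no proof of this lemma, treating it as an immediate marginalization, so the only question is whether your one-line argument actually goes through --- and it does not, precisely at the step you yourself single out as ``the hard part.'' You compute the mean response time of a job of \emph{fixed} size $x_J$ by integrating $\mathbb{E}[T(x_J,y)]$ against $f_p(y)=\int_{x}g(x,y)\,dx$, justifying this by noting that $f_p$ is the marginal density of the prediction. That is true, but the marginal density is not the relevant weight: the law of total expectation, applied to a tagged job conditioned on having true size $x_J$ (and on being predicted long), requires the \emph{conditional} density of the expensive prediction given that size, which by the model's normalization $\int_{y}g(x,y)\,dy=f(x)$ is $g(x_J,y)/f(x_J)$, not $f_p(y)$. (Conditioning further on $b=0$ changes nothing, since the cheap prediction depends on the job only through $x$.) The two weights coincide only when $g(x,y)=f(x)\,f_p(y)$, i.e., when predictions are independent of true sizes --- exactly the degenerate case in which predictions carry no information. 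For any informative predictor (e.g., the exponential or uniform predictors used in the simulations, where $y$ concentrates around $x$), $f_p(y)$ and $g(x_J,y)/f(x_J)$ differ, so the displayed identities do not follow from the argument you give.

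A correct marginalization yields $\mathbb{E}[T(x_J)]_{ext}^{PL}=\int_{y}\bigl(g(x_J,y)/f(x_J)\bigr)\,\mathbb{E}[T(x_J,y)]_{ext}^{PL}\,dy$, and likewise in the server-cost model; note that this is also the version consistent with the very next lemma in the paper, which averages $\mathbb{E}[T(x,y)]$ against the joint weight $(1-p_T(x))\,g(x,y)$ --- iterating your $f_p(y)$ formula over $x$ would not reproduce that expression. Your observation that the population-level load quantities ($\rho^{ext}_r$, the second moments, $a(r)$, and their server-cost analogues) require no re-conditioning when the tagged job's prediction is marginalized out is correct, and that part of the argument stands. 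The gap is confined to the choice of mixing density, which you assert rather than derive; as written, the ``one-line computation'' proves a different identity from the one stated.
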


\begin{lemma}
The mean response time for predicted long jobs in the external cost model is
$$\mathbb{E}[T_{ext}^{PL}] = \frac{\int_{x=0}^{\infty} \int_{y=0}^{\infty} (1-p_T(x))g(x,y) \mathbb{E}[T(x, y)]_{ext}^{{PL}} dy dx}{\int_{x=0}^{\infty} \int_{y=0}^{\infty} (1-p_T(x))g(x,y) dy dx}$$

and in the server cost model is:

$$\mathbb{E}[T_{srv}^{PL}] = \frac{\int_{x=0}^{\infty} \int_{y=0}^{\infty} (1-p_T(x))g(x,y) \mathbb{E}[T(x, y)]_{srv}^{{PL}} dy dx}{\int_{x=0}^{\infty} \int_{y=0}^{\infty} (1-p_T(x))g(x,y) dy dx}$$

\end{lemma}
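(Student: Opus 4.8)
The plan is to obtain the unconditional mean response time of a predicted-long job by the law of total expectation, averaging the per-type quantity $\mathbb{E}[T(x,y)]_{ext}^{PL}$ (respectively $\mathbb{E}[T(x,y)]_{srv}^{PL}$) already derived in the preceding lemmas over the type distribution $(x,y)$ of an arriving job, conditioned on that job being classified long. The entire content of the statement is then just the explicit form of this conditional distribution together with its normalization; no new queueing analysis is needed beyond what the SOAP lemmas already provide.

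First I would identify the type distribution of an arriving predicted-long job. An arriving job has true-size density $f(x)$; conditioned on its true size $x$, the cheap predictor declares it long with probability $1-p_T(x)$, and the expensive predictor — conditionally independent of the cheap outcome given $x$ — returns a value whose conditional density is $g(x,y)/f(x)$. Hence the joint (unnormalized) density that an arriving job is simultaneously of true size $x$, predicted long, and assigned expensive prediction $y$ is $(1-p_T(x))\,g(x,y)$. Integrating over all $(x,y)$ gives the total probability of being predicted long, $\int_{x=0}^\infty\int_{y=0}^\infty (1-p_T(x))g(x,y)\,dy\,dx = \int_{x=0}^\infty (1-p_T(x))f(x)\,dx = z$, using $\int_y g(x,y)\,dy = f(x)$; this is exactly the denominator in the statement. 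Next I would invoke PASTA so that a tagged arriving job sees the stationary system, making $\mathbb{E}[T(x,y)]_{ext}^{PL}$ the correct conditional mean for a predicted-long job of type $(x,y)$. Applying the law of total expectation over the normalized type density then yields
$$\mathbb{E}[T_{ext}^{PL}] = \frac{\int_{x}\int_{y}(1-p_T(x))g(x,y)\,\mathbb{E}[T(x,y)]_{ext}^{PL}\,dy\,dx}{\int_{x}\int_{y}(1-p_T(x))g(x,y)\,dy\,dx},$$
and the identical argument with $\mathbb{E}[T(x,y)]_{srv}^{PL}$ in place of $\mathbb{E}[T(x,y)]_{ext}^{PL}$ establishes the server-cost formula, since the weighting over job types is the same in both cost models (the models differ only in the per-type response time).

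The main obstacle is not the algebra but the probabilistic bookkeeping underlying the weighting. Specifically, I must justify: (i) the conditional independence of the cheap classification and the expensive prediction value given the true size $x$, which is precisely what licenses the product form $(1-p_T(x))\,g(x,y)$ rather than some more general joint density; (ii) that the per-type means are integrable against this weight, which requires the relevant stability conditions ($\rho^{ext}_r<1$, resp. $\rho^{srv}_r<1$, for the ranks appearing in the earlier lemmas) so that the waiting- and residence-time terms are finite and Fubini's theorem permits interchanging the expectation with the double integral; and (iii) that PASTA genuinely applies, so that the arriving-job average coincides with the time-stationary per-type average. Once these three points are secured, the formula follows immediately from conditioning.
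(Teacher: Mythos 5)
Your proposal is correct and matches the paper's (implicit) reasoning: the paper states this lemma without proof, treating it as the immediate consequence of averaging the per-type response times $\mathbb{E}[T(x,y)]^{PL}$ against the density $(1-p_T(x))g(x,y)$ of types among predicted-long jobs, normalized by $z=\int_{x=0}^{\infty}(1-p_T(x))f(x)\,dx$, exactly as you do. Your additional remarks on conditional independence given $x$, stability, and PASTA are consistent with the paper's stated modeling assumptions and add rigor rather than diverging from its approach.
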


\subsection{\alg{} Models Comparison}
In the server cost model, we observe that the mean response times for both predicted short and long jobs are consistently higher than those in the external cost model, because predictions are scheduled on the same server as the jobs. When we set the costs to zero, so there is no cost to predictions, both models yield identical mean response times. This follows from the definitions of $S''_{\ge T, r}(c_2)$ and $S''_{< T}(c_1)$ (Definition~\ref{def:service_time_w_price}) because with zero costs, these definitions match with those of $S''_{\ge T, r}$ and $S''_{< T}$. 
Additionally, setting the threshold $T$ to zero in \alg{} results in SPRPT (Shortest Predicted Remaining Processing Time) scheduling. With this threshold, there are no short jobs (i.e., $\mathbb{E}[S'_{<T}] = 0$), necessitating expensive predictions for all jobs. Thus, the mean response time for the predicted long jobs aligns with the mean response time of SPRPT~\cite{mitzenmacher2019scheduling} (analyzed for both of the two models in Appendix~\ref{appendix:sprpt_proofs}).

We emphasize that while we have derived equations for total costs for both models, comparing the practical implications of these total costs for the two models is challenging. First, resource allocation differs between the models: in the external cost model, predictions are scheduled on a server separate from the one handling the jobs, whereas in the server cost model, both predictions and jobs are scheduled on the same server. Incorporating a fixed cost into the mean response time for the external cost model does not translate directly to service time. This leads to potential differences in the interpretation of the costs of predictions between the two models.

\subsection{Generalization to Non-Fixed Costs}
\label{subsec: non_fixed_costs}
While we assume that the prediction costs as fixed, our approach naturally generalizes theoretically to random costs from a distribution, where that distribution may also depend on the service time of the job. Here we outline the necessary modifications in the analysis for this generalization.

We may consider prediction costs that are assumed to be independent over jobs. The cheap prediction cost is given by a density function $k_1(x,c_1)$, where $k_1(x,c_1)$ is the density corresponding to a job with actual size $x$ and cost of cheap prediction $c_1$. Hence,
$\int_{c=0}^\infty k_1(x,c) dc = f(x)$. Similarly, the expensive prediction cost is given by a density function $k_2(x,c_2)$, where $k_1(x,c_2)$ is the density corresponding to a job with actual size $x$ and the cost of expensive prediction is $c_2$.

In our analysis of the server cost model with fixed costs, for jobs with rank 4 in the first dimension, we used $r-a$ to encode the second dimension of the rank rather than the predicted remaining service time, which we noted is technically $r - (a - c_1 - c_2)$.  When $c_1$ and $c_2$ are fixed, doing so does not change the rankings of jobs, but for non-fixed costs, we would want to use the actual predicted remaining service time $r - (a - c_1 - c_2)$ for the rank function.

\section{Baselines}
\label{sec:baselines}

As baselines, we compare \alg{} with three distinct policies in the two proposed models. These policies are 1) FCFS\footnote{We could also consider any non-size-based policy such as LCFS or FB.}, a non-size-based policy that does not require predictions; 2) SPRPT, which involves performing expensive predictions for all jobs; and 3) 1bit advice~\cite{mitzenmacher2021queues}, which uses only cheap predictions, separating jobs into predicted shorts and predicted longs, and using FCFS as a scheduling policy for each category, meaning that we do not have a second stage of predictions.

These policies, along with \alg{}, can be placed on a spectrum based on their prediction costs. FCFS requires no predictions, while SPRPT requires expensive predictions for each job. The 1bit policy and \alg{} are positioned in the middle of this spectrum, with the 1bit policy incurring lower prediction costs than \alg{}. The question becomes, given prediction costs, what is the most cost-effective policy?

To compare all these policies, we analyze SPRPT and 1bit policies in the external cost model and the server cost model. These policies, initially introduced by Mitzenmacher~\cite{mitzenmacher2019scheduling, mitzenmacher2021queues}, were analyzed without considering the cost of predictions, so here we re-analyze them with prediction costs using the SOAP framework for consistency.

\subsection{SPRPT Analysis}
In SPRPT the job descriptors only include the job size and service time predictions, e.g. $\mathcal{D} =$ [size, predicted time] $= [x, r]$. Thus, the rank function in the external cost model is:
\begin{align}
\label{eq:ext_sprpt_rank_equation}
rank_{ext}([x,r], a) = r - a.
\end{align}
In the server cost model:
\begin{align}
\label{srv:rank_sprpt_equation}
rank_{srv}([x,b], a) = 
\begin{cases}
\langle 1, -a \rangle & \text{if $0 \le a \le c_1$ (initial rank, scheduling prediction)}, \\
\langle 2, r - a \rangle & \text{if } a > c_1 \text{ (jobs after prediction)}. \\
\end{cases}
\end{align}

We provide proofs of the following lemmas in Appendix~\ref{appendix:sprpt_proofs}.

\begin{restatable}{lemma}{lemmasprptext}
\label{lem:sprpt_ext}
    For SPRPT in the external cost model, the expected mean response time for a job of true size $x_J$ and predicted size $r$ is
    
    \begin{align*}
    \mathbb{E}[T(x_J, r)]_{ext}^{{SPRPT}} = &\frac{\lambda}{2(1-\rho'_r)^2} \Bigg(\int_{y = 0}^{r} \int_{x_I = 0}^{\infty} x_I^2\cdot  g(x_I, y) dx_I dy \\
    &+ \int_{t = r}^{\infty} \int_{x_I = t-r}^{\infty} g(x_I, t) \cdot(x_I - (t - r))^2 \cdot dx_I dt\Bigg) + \int_{0}^{x_J} \frac{1}{1-\rho'_{(r-a)^+}} \, da,
    \end{align*} where $\rho'_r= \lambda \int_{y = 0}^{r} \int_{x_I = 0}^{\infty} x_I \cdot g(x_I,y) dx_I dy$.
\end{restatable}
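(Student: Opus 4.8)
The plan is to apply the SOAP tagged-job machinery of Theorem~\ref{soaptheorem} exactly as in the proof of Lemma~\ref{lemma_skippredict_PL_ext}, but now to the single-class SPRPT rank function~\eqref{eq:ext_sprpt_rank_equation}. Fix a tagged job $J$ with descriptor $[x_J, r]$. Since $rank_{ext}([x,r],a) = r - a$ is strictly decreasing in the age $a$, the rank is monotonic, so $J$'s worst future rank is simply its current rank; hence $rank^{\text{worst}}_{d_J}(a) = r - a$ and $r_{worst} = rank^{\text{worst}}_{d_J}(0) = r$.

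Next I would compute the three quantities that feed Theorem~\ref{soaptheorem}. For $X^{\text{new}}$, a freshly arriving job $K$ with true size $x_K$ and predicted size $r_K$ outranks $J$ (and therefore runs to completion, delaying $J$ by its full $x_K$) precisely when $r_K < r - a$, and otherwise never preempts $J$; integrating the actual size against the joint density over the region $\{r_K < r-a\}$ gives $\mathbb{E}[X^{\text{new}}[r-a]] = \int_{y=0}^{(r-a)^+}\int_{x=0}^\infty x\,g(x,y)\,dx\,dy$, so that $\lambda \mathbb{E}[X^{\text{new}}[r-a]] = \rho'_{(r-a)^+}$. For the old-job contributions, an old job $I$ remains \emph{original} for its whole service iff $r_I \le r$ (its rank never exceeds $r_{worst}=r$), contributing $X_0^{\text{old}} = x_I\mathds{1}(r_I \le r)$; this yields the moments $\int_{y=0}^r\int x\,g\,dx\,dy$ and $\int_{y=0}^r\int x^2 g\,dx\,dy$, and in particular $\lambda \mathbb{E}[X_0^{\text{old}}[r]] = \rho'_r$. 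An old job with $r_I > r$ instead starts discarded, is \emph{recycled} once when its rank decreases to $r$ at age $r_I - r$, and then runs to completion for the remaining $x_I - (r_I - r)$ time; writing $t = r_I$ this gives the recycled second moment $\int_{t=r}^\infty \int_{x=t-r}^\infty g(x,t)(x-(t-r))^2\,dx\,dt$. Because the rank is monotone, a job crosses the threshold at most once, so $X_i^{\text{old}} = 0$ for all $i \ge 2$ and only the $i=0,1$ terms survive.

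Finally I would substitute into Theorem~\ref{soaptheorem}. The key simplification is that $\lambda\mathbb{E}[X_0^{\text{old}}[r]] = \lambda\mathbb{E}[X^{\text{new}}[r]] = \rho'_r$, which collapses the denominator of the waiting-time term to $(1-\rho'_r)^2$; the numerator is then $\lambda$ times the sum of the two surviving second moments, producing the first bracketed expression. The residence-time integral is $\int_0^{x_J} (1-\lambda\mathbb{E}[X^{\text{new}}[r-a]])^{-1}\,da = \int_0^{x_J}(1-\rho'_{(r-a)^+})^{-1}\,da$, matching the second term.

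The only real subtleties to get right are (i) distinguishing the predicted size $r_K$, which controls the ranking and hence the indicator regions, from the true size $x_K$, which measures the work actually contributed, this being exactly why the joint density $g(x,y)$ appears rather than the marginal $f$; (ii) the clamp $(r-a)^+$, needed because once $a > r$ the threshold $r-a$ is negative, no job has negative predicted size, and the new-work rate vanishes; and (iii) verifying that higher-order recyclings contribute nothing, which follows immediately from monotonicity of the rank. This proof is structurally identical to that of Lemma~\ref{lemma_skippredict_PL_ext} with the short/long split removed (i.e.\ with $p_T \equiv 0$ and no priority class~1), so the main effort is bookkeeping rather than any new idea.
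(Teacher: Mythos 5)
Your proposal is correct and follows essentially the same route as the paper's own proof in Appendix~\ref{appendix:sprpt_proofs}: monotonicity of the rank $r-a$ gives the worst future rank, the same three quantities $X^{\text{new}}$, $X_0^{\text{old}}$, and $X_1^{\text{old}}$ are computed with identical indicator regions and integrals, and Theorem~\ref{soaptheorem} is applied with the observation that $\lambda\mathbb{E}[X_0^{\text{old}}[r]]=\lambda\mathbb{E}[X^{\text{new}}[r]]=\rho'_r$ collapses the denominator. The additional remarks on the $(r-a)^+$ clamp and the role of the joint density $g(x,y)$ are accurate but do not change the argument.
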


\begin{restatable}{lemma}{lemmasprptsrv}
\label{lem:sprpt_srv}
    For SPRPT in the server time cost model, the expected mean response time for a job of true size $x_J$ and predicted size $r$ is
    \begin{align*}
    \mathbb{E}[T(x_J, r)]_{srv}^{{SPRPT}} = &\frac{\lambda}{{2(1-\rho''_r)^2}} \Bigg(\int_{y = r}^{\infty} \int_{x = 0}^{\infty} c_2^2 \cdot g(x, y)  dx dy + \int_{y = 0}^{r} \int_{x_I = 0}^{\infty} (c_2 + x_I)^2 \cdot g(x_I, y) dx_I dy \\ &+ \int_{t = r}^{\infty} \int_{x_I = t-r}^{\infty} g(x_I, t) \cdot(x_I - (t - r))^2 \cdot dx_I dt)\Bigg)
    + \int_{0}^{x_J} \frac{1}{1-\rho''_{(r-a)^+}} \, da,
    \end{align*} where $\rho''_r= \lambda \left( c_2 + \int_{y = 0}^{r} \int_{x_I = 0}^{\infty} x_I \cdot g(x_I,y) dx_I dy\right)$.
\end{restatable}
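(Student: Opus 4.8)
The plan is to apply the SOAP master formula (Theorem~\ref{soaptheorem}) to a tagged job $J$ of true size $x_J$ and predicted size $r$, following the same template as the \alg{} server-cost proofs, and to show that the four SOAP quantities collapse to the three squared-work terms in the numerator together with the load $\rho''_r$. The only structural novelty relative to the external-cost SPRPT analysis (Lemma~\ref{lem:sprpt_ext}) is that under \eqref{srv:rank_sprpt_equation} every job first passes through a prediction phase carrying a class-$1$ rank, which always outranks any job in service (class $2$); this forces every old and new job to contribute its full prediction cost $c_2$ to $J$'s delay, independent of its predicted size.

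First I would determine the worst future rank. Since a job's rank is $\langle 1, -a \rangle$ during its prediction phase and $\langle 2, r-a \rangle$ afterward, and since every class-$2$ rank exceeds every class-$1$ rank while $r-a$ decreases in the service age, $J$'s worst future rank is attained at the start of service. In the service-age convention of \eqref{srv:rank_sprpt_equation} this gives $rank_{d_J}^{\text{worst}}(a) = \langle 2, r-a \rangle$ and $r_{worst} = \langle 2, r \rangle$.

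Next I would compute the three transformed work quantities exactly as in the \alg{} server-cost proof. For a new job $K$: its prediction phase has a class-$1$ rank and is always below the threshold $\langle 2, r-a \rangle$, contributing $c_2$; its service counts only while its class-$2$ rank stays below the threshold, i.e.\ iff $r_K < r-a$, so $X_{x_K}^{new}[\langle 2, r-a \rangle] = c_2 + x_K\,\mathds{1}(r_K < r-a)$ and hence $\lambda\mathbb{E}[X^{new}[\langle 2, r-a \rangle]] = \rho''_{(r-a)^+}$. For an old job $I$ against $r_{worst} = \langle 2, r \rangle$: its prediction phase is original (contributing $c_2$) and its service is original iff $r_I \le r$, giving $X_{0,x_I}^{old}[\langle 2, r \rangle] = c_2 + x_I\,\mathds{1}(r_I \le r)$; squaring and integrating against $g(x,y)$ yields the $c_2^2$ term for $r_I > r$ and the $(c_2 + x_I)^2$ term for $r_I \le r$, which are precisely the first two numerator terms, while $\lambda\mathbb{E}[X_0^{old}[\langle 2, r \rangle]] = \rho''_r$. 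Finally, a job with $r_I > r$ is discarded at the start of service and re-crosses the threshold exactly once, at service age $t-r$ with $t = r_I$, after which it runs to completion contributing $x_I - (t-r)$; this produces $\mathbb{E}[(X_1^{old}[\langle 2, r \rangle])^2]$ equal to the third numerator term, with $X_i^{old} = 0$ for $i \ge 2$.

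Substituting these into Theorem~\ref{soaptheorem}, the denominator becomes $2\bigl(1 - \lambda\mathbb{E}[X_0^{old}[r_{worst}]]\bigr)\bigl(1 - \lambda\mathbb{E}[X^{new}[r_{worst}]]\bigr) = 2(1-\rho''_r)^2$, and the residence integral becomes $\int_0^{x_J}(1 - \rho''_{(r-a)^+})^{-1}\,da$, yielding the claimed expression. I expect the main obstacle to be the bookkeeping of the prediction phase: one must verify that the class-$1$ prediction rank is uniformly below the class-$2$ service threshold, so that $c_2$ is counted for both old and new jobs regardless of predicted size, and pin down the exact service age at which a large-prediction job crosses $\langle 2, r \rangle$, so the recycled contribution is counted once and only once. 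The recycling term is identical to that of the external-cost and \alg{} analyses, so it can be imported directly rather than rederived.
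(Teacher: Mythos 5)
Your proposal matches the paper's proof essentially step for step: the same worst future rank $\langle 2, r\rangle$, the same three transformed quantities $X^{new}[\langle 2, r-a\rangle] = c_2 + x_K\,\mathds{1}(r_K < r-a)$, $X_0^{old}[\langle 2, r\rangle] = c_2 + x_I\,\mathds{1}(r_I \le r)$, and the once-recycled contribution $x_I - (t-r)$, followed by the same substitution into Theorem~\ref{soaptheorem}. The argument is correct and takes the same route as the paper, including the key observation that the class-$1$ prediction phase always outranks $J$ and so contributes $c_2$ unconditionally.
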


\subsection{1bit Analysis}
For the 1bit policy, we define the rank function of this approach and then analyze it using SOAP framework. Here the job descriptors only include the job sizes and binary predictions, e.g. $\mathcal{D} =$ [size, predicted short/long $= [x, b]$. 

\begin{align}
\label{eq:ext_1bit_rank_equation}
rank_{ext}([x,b], a) = 
\begin{cases}
\langle 1, -a \rangle & \text{if } b = 1, \\
\langle 2, - a \rangle &  \text{ if } b = 0.
\end{cases}
\end{align}

In the server cost model, this approach results in the following rank function:
\begin{align}
\label{srv:rank_1bit_equation}
rank_{srv}([x,b], a) = 
\begin{cases}
\langle 2, -a \rangle & \text{if $ 0 \le a \le c_1$ (initial rank, and cheap prediction)}, \\
\langle 1, -a \rangle & \text{if } b = 1 \text{ and } a > c_1 \text{ (short jobs after cheap prediction)}, \\
\langle 3, -a \rangle & \text{if } b = 0 \text{ and } a > c_1 \text{ (long jobs after cheap prediction).}
\end{cases}
\end{align}

The mean response time for predicted short jobs of this approach is similar to predicted short jobs of \alg{} as nothing has changed.

We provide proofs of these 1bit lemmas in Appendix~\ref{appendix:1bit_proofs}.

\begin{restatable}{lemma}{lemmabitext}
\label{lem:1bit_ext}
    For the 1bit policy in the external cost model, the expected mean response time for a predicted long job of true size $x_J$ is
    \begin{align*}
    \mathbb{E}[T(x_J)]_{ext}^{{1bit, PL}} = &\frac{\lambda}{2(1-\rho)(1-\rho^{ext}_{new})} \mathbb{E}[{S}^2] + \int_{0}^{x_J} \frac{1}{1-\rho^{ext}_{new}} \, da,
    \end{align*} where $\rho^{ext}_{new} = \lambda \int_{0}^{\infty} p_T(x) x f(x) dx $, the load due to predicted short jobs.
\end{restatable}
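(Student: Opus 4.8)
The plan is to apply the SOAP master formula (Theorem~\ref{soaptheorem}) to a tagged predicted-long job $J$ with descriptor $d_J = [x_J, 0]$, in the same style as the proof of Lemma~\ref{lemma_skippredict_PL_ext}, but with the simpler rank function \eqref{eq:ext_1bit_rank_equation}. First I would observe that for $b=0$ the rank $\langle 2, -a\rangle$ is strictly decreasing in the age $a$, so the worst future rank coincides with the initial rank: $rank_{d_J}^{\text{worst}}(a) = \langle 2, -a\rangle$ and $r_{worst} = \langle 2, 0\rangle$. This monotonicity is what makes the 1bit case cleaner than SPRPT, since no job's rank ever climbs back above a value it has already passed.

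Next I would compute the three SOAP quantities. For $X^{\text{new}}$, a new predicted-short job $K$ carries a class-$1$ rank that is always below $J$'s class-$2$ worst rank, so it preempts $J$ and runs to completion, contributing $x_K$; a new predicted-long job arrives at rank $\langle 2, 0\rangle$, which is already worse than $\langle 2, -a\rangle$ for any $a>0$, so it never delays $J$ --- this is precisely the FCFS-within-class behavior for long jobs. Hence $\mathbb{E}[X^{\text{new}}[rank_{d_J}^{\text{worst}}(a)]] = \int_{0}^{\infty} p_T(x)\,x\,f(x)\,dx$, independent of $a$, so that $\lambda\,\mathbb{E}[X^{\text{new}}] = \rho^{ext}_{new}$, which explains why the residence integral reduces to $\int_0^{x_J} (1-\rho^{ext}_{new})^{-1}\,da$.

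For the old-job terms I would argue that every old job stays \emph{original} until completion: a predicted-short job keeps a class-$1$ rank below $r_{worst}$, and a predicted-long job starts at rank exactly $\langle 2, 0\rangle = r_{worst}$ and only decreases thereafter, so neither type ever exceeds $r_{worst}$. Consequently $X_{0,x_I}^{\text{old}}[\langle 2, 0\rangle] = x_I$ for every old job $I$ regardless of its prediction, giving $\lambda\,\mathbb{E}[X_0^{\text{old}}] = \rho$ (the total system load) and $\mathbb{E}[(X_0^{\text{old}})^2] = \mathbb{E}[S^2]$, the full \emph{unconditioned} moments of the service distribution. Because no job is ever discarded, no job is ever recycled, so $X_i^{\text{old}} = 0$ for all $i \ge 1$ and the infinite sum in Theorem~\ref{soaptheorem} collapses to the single term $\mathbb{E}[S^2]$.

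Substituting $\lambda\mathbb{E}[X_0^{\text{old}}] = \rho$, $\lambda\mathbb{E}[X^{\text{new}}] = \rho^{ext}_{new}$, and $\sum_{i\ge 0} \mathbb{E}[(X_i^{\text{old}})^2] = \mathbb{E}[S^2]$ into Theorem~\ref{soaptheorem} yields the stated expression directly. The one step requiring care --- the main obstacle --- is the bookkeeping that separates this from the \alg{} long-job analysis of Lemma~\ref{lemma_skippredict_PL_ext}: here the long-job class runs FCFS rather than SPRPT, so I must verify that long jobs are neither preempted by later long arrivals nor ever recycled, which is exactly the content of the monotone-rank observation (contrast with the SPRPT case, where the $r-a$ secondary rank does produce recycling and hence the extra $a(r)$ term). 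Once that is established, the remainder is a routine substitution of the first and second moments of $S$.
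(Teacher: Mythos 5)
Your proposal is correct and follows essentially the same route as the paper's proof: identify the monotone worst future rank $\langle 2,0\rangle$, observe that only predicted-short arrivals delay $J$ (giving $\lambda\mathbb{E}[X^{\text{new}}]=\rho^{ext}_{new}$), that every old job is original until completion with no recycling (giving $\lambda\mathbb{E}[X_0^{\text{old}}]=\rho$ and $\sum_i\mathbb{E}[(X_i^{\text{old}})^2]=\mathbb{E}[S^2]$), and substitute into Theorem~\ref{soaptheorem}. The only cosmetic difference is that the paper also offers an informal second reading of the result as an FCFS system slowed down by the predicted-short subsystem, which you omit but do not need.
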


\begin{restatable}{lemma}{lemmabitsrv}
\label{lem:1bit_srv}
    For the 1bit policy in the server time cost model, the expected mean response time for a predicted long job of true size $x_J$ is
    \begin{align*}
        \mathbb{E}[T(x_J)]_{srv}^{{1bit, PL}} = &\frac{\lambda}{2(1-\rho_{c_1})(1-\rho^{srv}_{new})} \int_{0}^{\infty} (x+c_1)^2 f(x) \, dx + \int_{0}^{x_J} \frac{1}{1-\rho^{srv}_{new}} \, da,
    \end{align*} where $\rho^{srv}_{new} = \lambda \left(c_1 + \int_{0}^{\infty} p_T(x) x f(x) dx \right)  \mbox{ and  } \rho_{c_1} = \lambda\left(\int_{0}^{\infty} (x + c_1) f(x) dx \right) $.
\end{restatable}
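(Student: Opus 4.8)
The plan is to apply the SOAP master formula (Theorem~\ref{soaptheorem}) to a tagged predicted-long job $J$ with descriptor $(b,r)=(0,*)$ and true size $x_J$, using the server-cost rank function~\eqref{srv:rank_1bit_equation}, following the same template as the proofs of Lemmas~\ref{lemma_skippredict_PL_ext} and~\ref{lem:1bit_ext}. The first step is to identify the worst future rank. Since $J$'s rank is $\langle 2,-a\rangle$ while $a\le c_1$ (cheap prediction) and then jumps to the strictly lower-priority class $\langle 3,-a\rangle$ for $a>c_1$, and since within class~$3$ the second coordinate decreases with age, the worst future rank is $rank_{d_J}^{\text{worst}}(a)=\langle 3,-c_1\rangle$ for $a\le c_1$ and $\langle 3,-a\rangle$ for $a>c_1$, so $r_{worst}=\langle 3,-c_1\rangle$. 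It then remains to evaluate $X^{\text{new}}[rank_{d_J}^{\text{worst}}(a)]$, $X_0^{\text{old}}[r_{worst}]$, and $X_i^{\text{old}}[r_{worst}]$.

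For the new-job term, a newly arriving job $K$ first occupies the server for its cheap prediction (class~$2$), which outranks $J$'s class-$3$ worst future rank, so it always delays $J$ by $c_1$. If $K$ is predicted short it then enters class~$1$ and runs to completion ahead of $J$, contributing an additional $x_K$; if $K$ is predicted long it enters class~$3$ at age $c_1$ with rank $\langle 3,-c_1\rangle$, tying $r_{worst}$, and the tie is broken by FCFS in $J$'s favor (since $K$ arrived later), so $K$ contributes nothing beyond $c_1$. Hence $X^{\text{new}}_{x_K}[rank_{d_J}^{\text{worst}}(a)]$ equals $c_1+x_K$ for short $K$ and $c_1$ for long $K$, independent of $a$, giving $\lambda\,\mathbb{E}[X^{\text{new}}]=\lambda\big(c_1+\int_0^\infty p_T(x)\,x\,f(x)\,dx\big)=\rho^{srv}_{new}$. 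Because the integrand is constant in $a$, the residence-time integral reduces to $\int_0^{x_J}\frac{1}{1-\rho^{srv}_{new}}\,da$.

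For the old-job term, any job $I$ arriving before $J$ precedes $J$ under FCFS, so whether $I$ is short (class~$1$) or long (class~$3$) its rank never exceeds $r_{worst}=\langle 3,-c_1\rangle$ during either its prediction phase or its service phase; thus $I$ remains \emph{original} for its entire sojourn, contributing $X_{0,x_I}^{\text{old}}[r_{worst}]=c_1+x_I$ in both cases. This yields $\lambda\,\mathbb{E}[X_0^{\text{old}}]=\lambda\int_0^\infty (x+c_1)f(x)\,dx=\rho_{c_1}$ and $\mathbb{E}[(X_0^{\text{old}})^2]=\int_0^\infty (x+c_1)^2 f(x)\,dx$. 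Since no old job ever crosses above $r_{worst}$, none is discarded and hence none is recycled, so $X_i^{\text{old}}[r_{worst}]=0$ for all $i\ge 1$. Substituting these quantities into Theorem~\ref{soaptheorem} gives the claimed formula.

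The main obstacle, exactly as in the external-cost case, is the asymmetric treatment of predicted-long jobs between the new- and old-job terms: a new long job contributes only its prediction time $c_1$ (its service falling behind $J$ via the FCFS tie-break at the shared boundary rank $\langle 3,-c_1\rangle$), whereas an old long job contributes its full $c_1+x_I$. Pinning down the tie-breaking and the ``original versus discarded'' classification precisely at the boundary rank $\langle 3,-c_1\rangle$ is the delicate step; once that is settled, the remaining computation is routine bookkeeping of the SOAP integrals.
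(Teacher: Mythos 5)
Your proposal is correct and follows essentially the same route as the paper's proof: identify the tagged long job's worst future rank in class~3, compute $X^{\text{new}}$ (contribution $c_1+x_K$ for predicted-short arrivals, $c_1$ for predicted-long ones), observe that every old job is original for its full sojourn $c_1+x_I$ with no recycled jobs, and plug into Theorem~\ref{soaptheorem}. Your slightly more careful bookkeeping of the worst future rank as $\langle 3,-c_1\rangle$ (versus the paper's $\langle 3,0\rangle$) and of the FCFS tie-break changes nothing in the resulting integrals.
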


\subsection{Baselines Comparison}
In the server cost model, the total costs are simply the mean response time in the system for each policy, while in the external model, the total costs are as follows:


{
\renewcommand{\arraystretch}{1.5}
\begin{tabular}{l l}
\textbf{Policy} & \textbf{Cost} \\
\hline
FCFS & $\mathbb{E}[T]^{FCFS} = \frac{\lambda \mathbb{E}[S^2]}{2(1-\rho)} + \mathbb{E}[S]$ \\
SPRPT & \hyperref[lem:sprpt_ext]{$\mathbb{E}[T]_{ext}^{\text{SPRPT}}$}  $ \ + \ c_2$ \\
1bit & $(1-z) \cdot$ \hyperref[lemma_skippredict_PS_ext]{$\mathbb{E}[T]_{ext}^{1bit, PS}$} + $z \cdot$ \hyperref[lem:1bit_ext]{$\mathbb{E}[T]_{ext}^{1bit, PL}$} $\ + \ c_1$ \\
\alg{} & $(1-z) \cdot$ \hyperref[lemma_skippredict_PS_ext]{$\mathbb{E}[T]_{ext}^{\alg{}, PS}$} $+ z \cdot ($\hyperref[lemma_skippredict_PL_ext]{$\mathbb{E}[T]_{ext}^{\alg{}, PL}$} $\ + \ c_2) + c_1$ \\
\end{tabular}
}


Without looking at the formulas, it is intuitive that under even moderate loads, when the cost of the expensive prediction is low (close to $c_1$), the total cost of \alg{} would be greater than that for SPRPT. Also in this case SPRPT would outperform the 1bit approach, as size-based policies are generally more effective than non-size-based ones when predictions are reasonably accurate. However, when the expensive prediction cost $c_2$ is high, \alg{} or 1bit would be better options than SPRPT. Note \alg{} and 1bit have the same response times for jobs predicted to be short, as both schedule these jobs in the same way. Thus, the efficiency of \alg{} over the 1bit approach depends on the value of using an SPRPT-strategy for the remaining jobs.

\section{Simulation with Baselines}
\label{sec:simulation}
To gain more insight into when to invest in prediction and how \alg{} compares to other policies, in this section we compare \alg{}, SPRPT, 1bit and FCFS using simulation.

We consider the setting of the single queue with Poisson arrivals, and two job service time distributions; exponentially distributed with mean 1 ($f(x) = e^{-x}$) and the Weibull distribution with cumulative distribution $F=1-e^{-\sqrt{2x}}$. The Weibull distribution is heavy-tailed, so that while the average service time remains 1, there are many more very long jobs than with the exponential distribution. We use two prediction models: 1) Exponential predictions \cite{mitzenmacher2019scheduling}, where a prediction for a job with service time $x$ is itself exponentially distributed with mean $x$. 2) Uniform predictions \cite{mitzenmacher2019scheduling}, where a prediction for a job with service time $x$ is uniformly distributed between $(1-\alpha)x$ and $(1+\alpha)x$ for a parameter $\alpha$. Table~\ref{tab:prediction_models} in the Appendix~\ref{appendix:simulation} includes $p_T(x)$ and $g(x,y)$ of these models. We note that we have checked simulation results for single queues against the equations we have derived in Section~\ref{sec:results} using the ``perfect predictor,'' where the cheap and expensive predictors are always accurate, as the integrals are simpler in this case. Each of the two-stage predictors could be from a different model. In this section, we present results for the exponentially distributed service time with uniform predictor, where we model both predictors by the uniform model but with different $\alpha$ values; $\alpha = 0.8$ for the cheap predictor and $\alpha = 0.2$ for the expensive one. (The cheap predictor here returns a single bit by comparing the predicted value with the threshold;  this is not how an actual prediction would work, but it is just a test model for simulation.)
For the Weibull distribution, we use the exponential predictor. Results of the other combination appear in Appendix~\ref{appendix:simulation}. The default costs for the external cost model are $c_1 = 0.5, c_2=2 $ and $c_1 = 0.01, c_2=0.05$ for the server cost model. When $T$ is not tested, we default to $T=1$ for simplicity. Though we could aim to pick an optimal threshold for \alg{} for comparison, we have found that $T=1$ is effectively near optimal in many cases, and is suitable for comparison purposes.

\subsection{\alg{} Benefit Increases with Larger Cost Gaps}

We have found that \alg{} is more cost-effective than other policies when there is a difference between the costs of the two predictions, with a greater gap leading to higher improvement.
In Figures~\ref{fig:ext_ratio}~\ref{fig:srv_ratio}~\ref{fig:ext_T_weibull}~\ref{fig:srv_ratio_weibull}, we have $T=1$, $c_1$ is fixed to the default value, and we change $c_2$ by varying $k$ where $c_2=k c_1$. For similar or very close costs ($k=1$), \alg{} is less useful, and SPRPT may be a better option. However, as $c_2$ values increase, \alg{} becomes more cost-effective. FCFS and 1bit are not affected as they do not require expensive predictions. For Weibull distributed service times, the costs of \alg{} and SPRPT are similar due to the heavy-tailed nature of the Weibull distribution. However, as expected, the cost gap grows with an increase in $c_2$.

\begin{figure*}[t]
    \centering
    \begin{tabular}{ccc}
        \subfloat[cost vs $c_2$ - External]{\includegraphics[width = \matrixCellWidth]{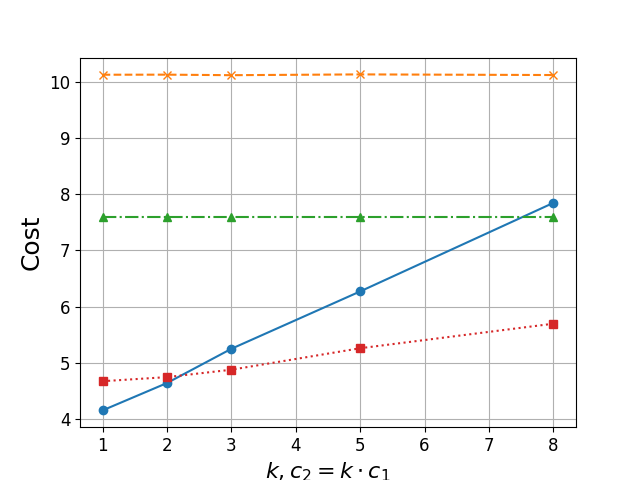}\label{fig:ext_ratio}} &
        \subfloat[cost vs T - External]{\includegraphics[width = \matrixCellWidth]{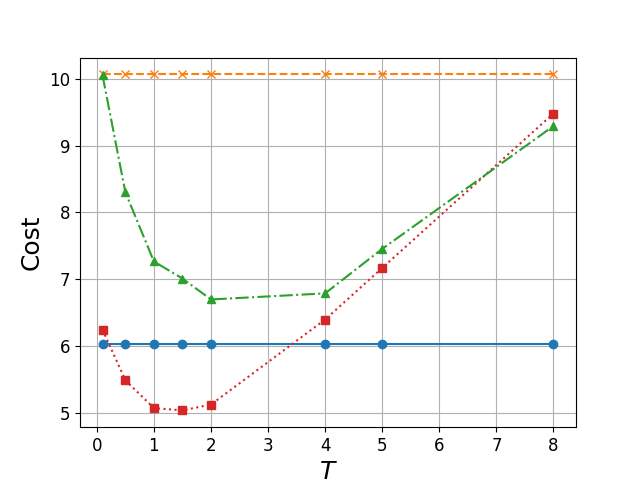}\label{fig:ext_T}} &
        \subfloat[cost vs. $\lambda$ - External]{\includegraphics[width=\matrixCellWidth]{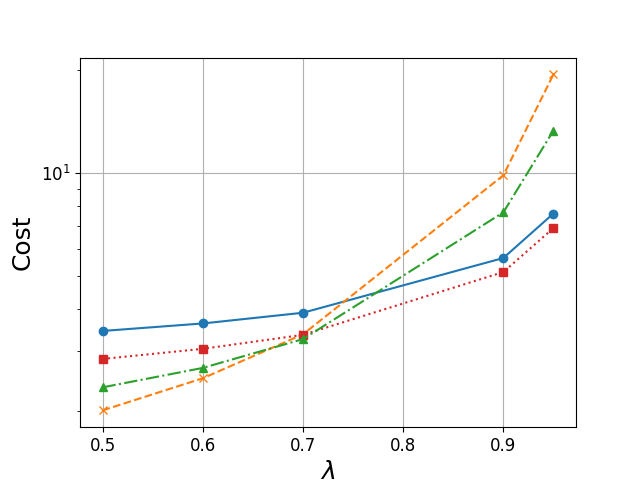}\label{fig:ext_arriving}} \\
        \subfloat[cost vs $c_2$- Server]{\includegraphics[width = \matrixCellWidth]{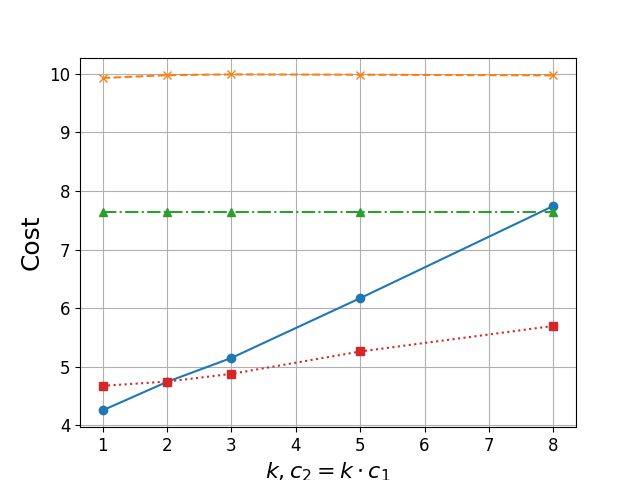}\label{fig:srv_ratio}} &
        \subfloat[cost vs T - Server]{\includegraphics[width = \matrixCellWidth]{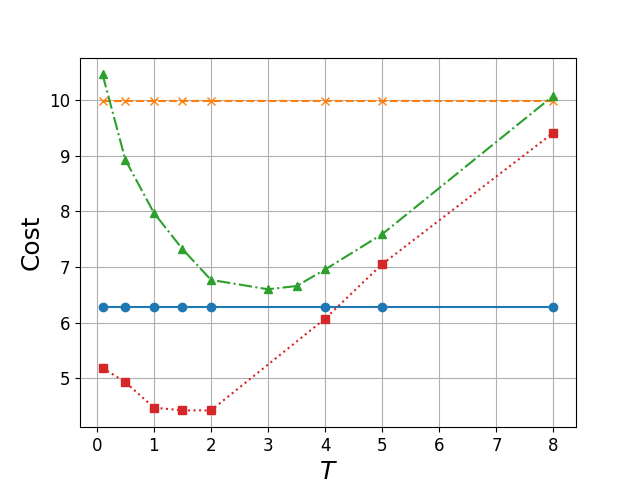}\label{fig:srv_T}} &
        \subfloat[cost vs. $\lambda$- Server]{\includegraphics[width=\matrixCellWidth]{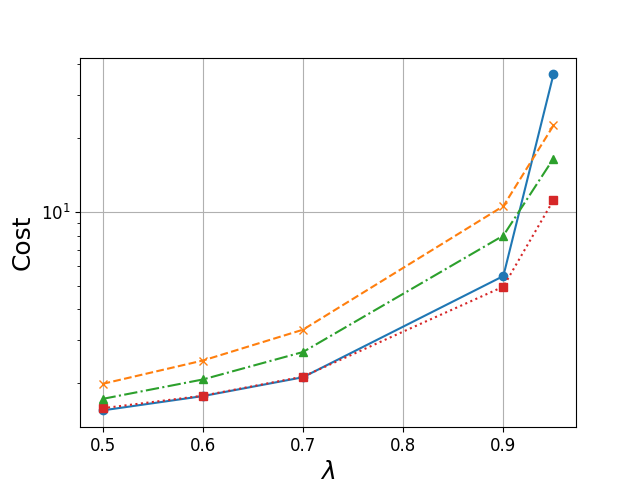}\label{fig:srv_arriving}}
        \tabularnewline
        \multicolumn{3}{c}{\subfloat{\includegraphics[width=0.5\linewidth]{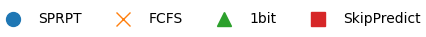}}}	

    \end{tabular}
    \caption{Cost in the external cost and server cost models using uniform predictor, cheap predictor is configured with $\alpha = 0.8$, expensive predictor with $\alpha=0.2$, service times are exponentially distributed with mean 1. The default costs for the external model are $c_1= 0.5, c_2=2$ and for the server cost model are $c_1= 0.01, c_2=0.05$ (a + d) Cost vs. $c_2$ when $\lambda = 0.9$ and $T=1$ (b + e) Cost vs. $T$ when $\lambda = 0.9$ (c + f) Cost vs. $\lambda$ when $T=1$.}
    \label{fig:sim_uniP_dist_exp}
\end{figure*}

\begin{figure*}[t]
    \centering
    \begin{tabular}{ccc}
        \subfloat[cost vs $c_2$ - External]{\includegraphics[width = \matrixCellWidth]{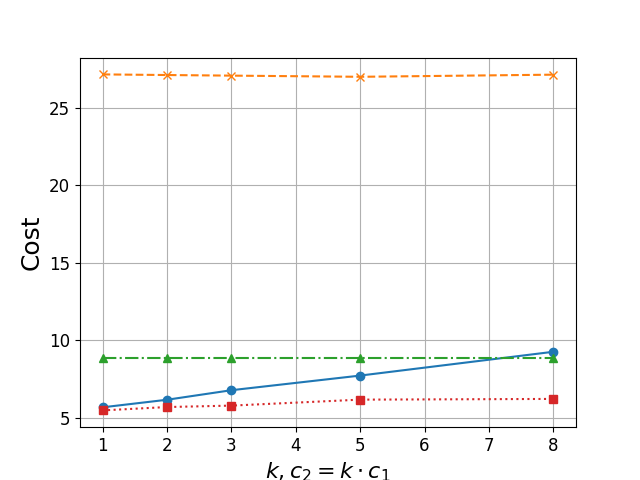}\label{fig:ext_ratio_weibull}} &
        \subfloat[cost vs T - External]{\includegraphics[width = \matrixCellWidth]{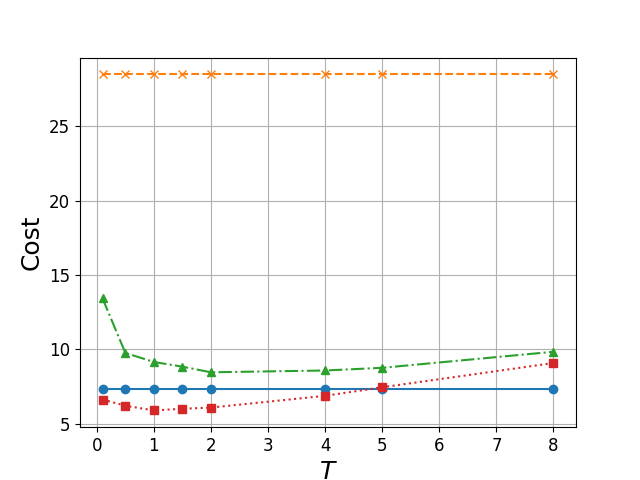}\label{fig:ext_T_weibull}} &
        \subfloat[cost vs. $\lambda$ - External]{\includegraphics[width=\matrixCellWidth]{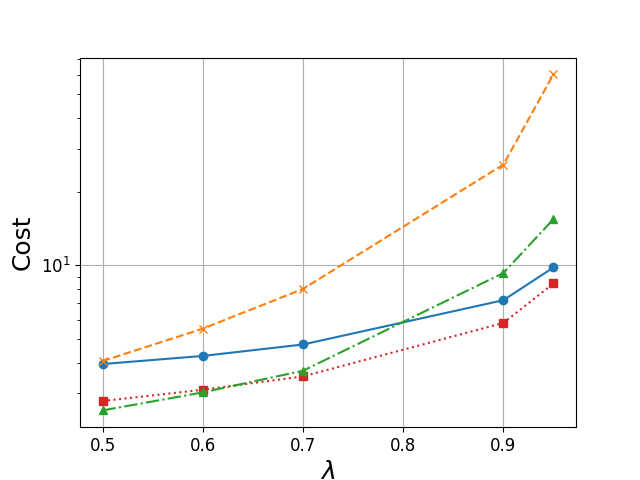}\label{fig:ext_arriving_weibull}} \\
        \subfloat[cost vs $c_2$- Server]{\includegraphics[width = \matrixCellWidth]{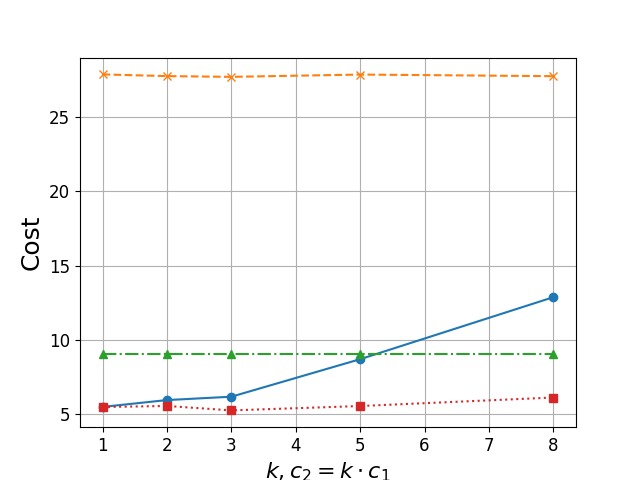}\label{fig:srv_ratio_weibull}} &
        \subfloat[cost vs T - Server]{\includegraphics[width = \matrixCellWidth]{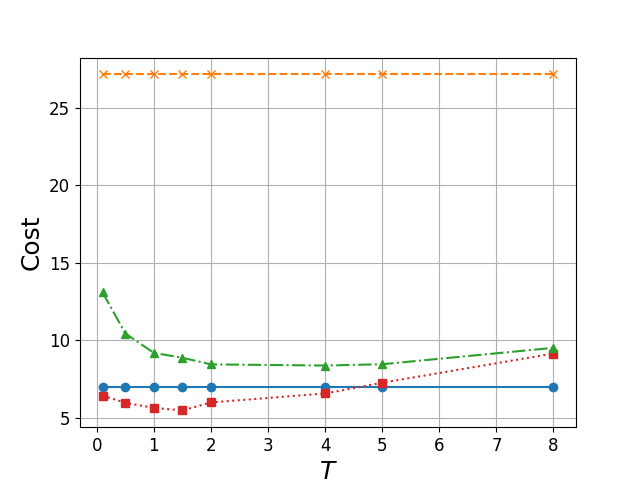}\label{fig:srv_T_weibull}} &
        \subfloat[cost vs. $\lambda$- Server]{\includegraphics[width=\matrixCellWidth]{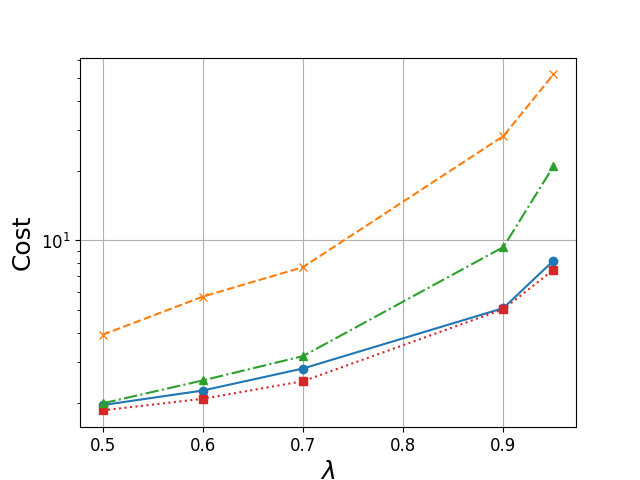}\label{fig:srv_arriving_weibull}}
        \tabularnewline
        \multicolumn{3}{c}{\subfloat{\includegraphics[width=0.5\linewidth]{figs/graphs/legend.png}}}	

    \end{tabular}
    \caption{Cost in the external cost and server cost models using exponential predictor for Weibull service time. The default costs for the external model are $c_1= 0.5, c_2=2$ and for the server cost model are $c_1= 0.01, c_2=0.05$ (a + d) Cost vs. $c_2$ when $\lambda = 0.9$ and $T=1$ (b + e) Cost vs. $T$ when $\lambda = 0.9$ (c + f) Cost vs. $\lambda$ when $T=1$.}
    \label{fig:sim_expP_dist_weibull}
\end{figure*}

\subsection{Expensive Predictions are Effective Under High Load}



At high load, \alg{} has the potential for improvement over other policies. The effectiveness of predictions under low load depends on the service time distribution, as shown in Figures~\ref{fig:ext_arriving} and ~\ref{fig:ext_arriving_weibull}. For exponentially distributed service times, under a low rate, a non-size-based policy (FCFS) is reasonable, while with a Weibull serivce distribution having a cheap prediction, 1bit is preferable. These Figures, with $T=1$, show that investing in expensive predictions (service times) becomes beneficial at higher rates, where \alg{} yields the lowest cost in both distributions.
We should note, however, that under extremely high load, prediction-based scheduling (SPRPT, \alg{} or 1bit) risks overflow issues, making FCFS a better option.

\subsection{Impact of T on \alg{} Cost}
In Figures~\ref{fig:ext_T},~\ref{fig:srv_T},~\ref{fig:ext_T_weibull},~and \ref{fig:srv_T_weibull}, we compare the cost vs. $T$ when $\lambda=0.9$. As $T$ increases, both \alg{} and 1bit demonstrate reduced costs in both service time distributions. 
However, past a certain $T$ threshold, which depends on the arrival rate and actual costs, we observe a rise in costs due to the decreasing number of jobs requiring expensive prediction. This leads to a reduced load for expensive predictions, making job size prediction for scheduling between predicted long jobs less effective. As $T$ gets large, \alg{} and 1bit become the same, as both serve predicted short jobs similarly, which in this case are the majority of the jobs.
When considering Weibull service times and default costs, \alg{} shows a gain over SPRPT but it is smaller than the gain over SPRPT with exponential service time. This is because SPRPT is effective for long jobs in the Weibull distribution. Yet, as shown earlier, \alg{}'s advantages grow with increasing differences in prediction costs.

\section{What if the cheap predictions are not really cheap?}
\label{sec:delayp}

While it is a reasonable assumption in real-world systems that cheap predictions may be available and cost-effective, here we consider the case where cheap predictions are either unavailable or not available for substantially less cost than expensive predictions.  In such a scenario, we expect \alg{} to be less effective, so we here consider an alternative algorithm called \algtwo{}. Unlike the intuitive approach of applying predictions to all jobs, \algtwo{} saves the cheap predictions and avoids doing expensive predictions for all jobs. \algtwo{} schedule jobs FCFS, but instead of using cheap prediction, it limits each job to a given limit $L$ of time, at which point it would be preempted, and treated as a \emph{long job}. At that point, the job could go through expensive prediction and be scheduled based on SPRPT scheduling.
A job that finishes before $L$ units of service would, in this setting, be a short job that finishes without prediction. Similarly to \alg{}, we assume the expensive predictions are independent over jobs and are determined by the density function $g(x,y)$, and that long jobs obtain an expensive prediction at some fixed cost $c_2$.
We define $z' = \int_{x=L}^{\infty} f(x)dx$ as the expected fraction of jobs requiring the expensive prediction and use it in Definitions~\ref{def:external_cost} and~\ref{def:srv_cost} to analyze \algtwo{} cost in the external cost model and in the server cost model. (Note that in \algtwo{} $c_1=0$.)

Rank function of \algtwo{}:
We model the system using $\mathcal{D} =$ [size, predicted time] $= [x, r]$.
Here we assume that the service time prediction $r$ is greater than $L$, because jobs that require expensive prediction are longer than $L$. Since a job's age is $L$ when it is preempted and obtains a prediction, in the external model, the predicted remaining time for long jobs is $r-L - (a-L) = r-a$.  For the service time model their age after being predicted starts at $L+c_2$, so the predicted remaining time is $r-L - (a-L-c_2) = r-a -c_2$. As $c_2$ is fixed among all jobs, instead of the predicted remaining time we can use $r-a$ as the rank for convenience.
Accordingly, \algtwo{} in the external cost model has the following rank function:

\begin{align}
\label{eq:alg2_ext_rank_equation}
rank_{ext}([x, r], a) = 
\begin{cases}
\langle 1, -a \rangle & \text{if }  0 \le a < L \\
\langle 2, r - a \rangle &  \text{ if } a \ge L
\end{cases}
\end{align}

In the server cost model, \algtwo{} results in the following rank function:
\begin{align}
\label{srv:alg2_rank_equation}
rank_{srv}([x, r], a) = 
\begin{cases}
\langle 1, -a \rangle & \text{if $0 \le a < L$ (initial rank)}, \\
\langle 2, -a \rangle & \text{if } L \le a \le L + c_2  \text{ (expensive prediction calculation)}, \\
\langle 3, r-a \rangle & \text{if } a > L + c_2  \text{ (long jobs after expensive prediction)}.
\end{cases}
\end{align}

\begin{restatable}{lemma}{lemmadelaypexts}
\label{lem:delayp_ext_s}
For \algtwo{} in both the external cost model and the server time cost model, the expected mean response time for a short job is
\begin{align*}
    \mathbb{E}[T]_{ext}^{{\algtwo{}, S}} = \mathbb{E}[T]_{srv}^{{\algtwo{}, S}}  = &\frac{\lambda}{2(1-\rho_{L})} \Bigg(\int_{0}^{L} x^2 f(x) \, dx + \int_{L}^{\infty} L^2 f(x) dx \Bigg) + \int_{0}^{L} x f(x) dx,
\end{align*}
where $\rho_{L} = \lambda \left(\int_{0}^{L} x f(x) dx + \int_{L}^{\infty} L f(x) dx\right)$, the load due to jobs while limiting their sizes to $L$.

\end{restatable}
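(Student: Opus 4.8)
The plan is to treat the two models together and reduce the claim to a single SOAP computation, because a genuinely short job never ``feels'' the difference between them. Observe that a tagged short job $J$ has true size $x_J < L$, so its age $a$ never reaches $L$; by the rank functions \eqref{eq:alg2_ext_rank_equation} and \eqref{srv:alg2_rank_equation}, for $0 \le a < L$ both models assign $J$ the identical rank $\langle 1, -a \rangle$. Hence $J$ experiences exactly the same priority dynamics in the external and server models, and the equality $\mathbb{E}[T]_{ext}^{\algtwo{}, S} = \mathbb{E}[T]_{srv}^{\algtwo{}, S}$ follows immediately; it then remains to evaluate one of them via Theorem~\ref{soaptheorem}. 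Since $\langle 1, -a \rangle$ is strictly decreasing in $a$, the worst future rank is the initial rank, so $rank_{d_J}^{\text{worst}}(a) = \langle 1, -a \rangle$ and $r_{worst} = \langle 1, 0 \rangle$.

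Next I would compute the three SOAP quantities at this threshold. For $X^{\text{new}}[\langle 1, -a \rangle]$: a new arrival enters at rank $\langle 1, 0 \rangle$, which by the FCFS tie-break sits behind $J$, and as $J$ ages its rank only decreases; thus no later arrival ever outranks $J$, giving $X^{\text{new}}[\langle 1, -a \rangle] = 0$ for all $a$. For $X_0^{\text{old}}[\langle 1, 0 \rangle]$: an old job $I$ of size $x_I$ stays \emph{original} (rank at most $\langle 1, 0 \rangle$) precisely while its own age is below $L$, i.e.\ during its class-$1$ phase, and therefore contributes $\min(x_I, L)$ --- its whole size when $x_I < L$, and only $L$ when $x_I \ge L$. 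This yields $\mathbb{E}[X_0^{\text{old}}] = \int_0^L x f(x)\,dx + \int_L^\infty L f(x)\,dx = \rho_L/\lambda$ and $\mathbb{E}[(X_0^{\text{old}})^2] = \int_0^L x^2 f(x)\,dx + \int_L^\infty L^2 f(x)\,dx$. Finally, once any job's age exceeds $L$ its leading rank coordinate becomes $2$ (or, in the server model, $2$ and then $3$) and never returns to $1$, so no job is ever recycled below $r_{worst}$; hence $X_i^{\text{old}}[\langle 1, 0 \rangle] = 0$ for all $i \ge 1$.

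I would then assemble these in Theorem~\ref{soaptheorem}. Because $\mathbb{E}[X^{\text{new}}] = 0$, the factor $1 - \lambda\mathbb{E}[X^{\text{new}}]$ equals $1$, so the residence integral collapses to $\int_0^{x_J} 1\, da = x_J$, and the waiting term reduces to $\frac{\lambda\,\mathbb{E}[(X_0^{\text{old}})^2]}{2(1-\rho_L)}$, which is exactly the first term of the claim. Averaging the residence $x_J$ over the short-job sizes produces the $\int_0^L x f(x)\,dx$ term, completing the stated formula. Equivalently, one may recognize the whole quantity as the Pollaczek--Khinchine mean response time of an M/G/1/FCFS queue whose effective service time is $\min(X, L)$, which serves as a useful sanity check.

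The only genuinely non-mechanical step is the evaluation of $X_0^{\text{old}}$: one must notice that a short job waits not only behind other short jobs but also behind the initial length-$L$ segments of every earlier long job, giving the truncated service $\min(x_I, L)$ and hence the second moment $\int_0^L x^2 f(x)\,dx + \int_L^\infty L^2 f(x)\,dx$ rather than the naive $\int_0^L x^2 f(x)\,dx$; and one must confirm that the post-$L$ demotion precludes any recycling. Once these are settled, the model-equality and the remaining terms are routine.
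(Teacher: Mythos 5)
Your proposal is correct and follows essentially the same route as the paper's own proof: identify the monotone rank $\langle 1,-a\rangle$ common to both models, compute $X^{\text{new}}=0$, $X_0^{\text{old}}=\min(x_I,L)$ with no recycled jobs, and apply Theorem~\ref{soaptheorem}. The only additions are explicit justifications the paper leaves implicit (why the two models coincide for short jobs, and the Pollaczek--Khinchine sanity check with truncated service $\min(X,L)$), which are accurate but do not change the argument.
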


\begin{restatable}{lemma}{lemmadelaypextl}
\label{lem:delayp_ext_l}
    For \algtwo{} in the external cost model, the expected mean response time for a long job of true size $x_J$ and predicted size $r$ is

    \begin{align*}
    \mathbb{E}[T(x_J, r)]_{ext}^{{\algtwo{}, L}}  = &\frac{\lambda}{2(1-\rho^{ext}_{L,r})^2} \Bigg(\int_{x=0}^{L} x^2 f(x) dx  \\
    &+ \int_{y = 0}^{r} \int_{x = L}^{\infty} x^2 \cdot g(x,y) dx dy + \int_{y = r}^{\infty} \int_{x = L}^{\infty} L^2 \cdot g(x,y) dx dy \\
    &+\int_{t = r}^{\infty} \int_{x =L+ t-r}^{\infty} g(x, t)\cdot(x -L- (t - r))^2 \cdot dx dt\Bigg) +  \int_{0}^{x_J} \frac{1}{1-\rho^{ext}_{L,(r-a)^+}} \, da,
\end{align*} where $\rho^{ext}_{L, r} = \lambda \left(\int_{x=0}^{L} x f(x) dx + \int_{y = 0}^{r} \int_{x = L}^{\infty} x \cdot g(x,y) dx dy + \int_{y = r}^{\infty} \int_{x = L}^{\infty} L \cdot g(x,y) dx dy\right)$ is the load due to short jobs, long jobs predicted to be less than $r$, and other long jobs with their size limited at $L$. Here $(r-a)^+ = max(r-a, 0)$.

\end{restatable}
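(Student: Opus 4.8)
The plan is to run the SOAP tagged-job analysis of Theorem~\ref{soaptheorem} on a tagged long job $J$ of true size $x_J$ and predicted size $r>L$, exactly in the style of the proofs of Lemma~\ref{lemma_skippredict_PL_ext} and Lemma~\ref{lem:sprpt_ext}. Working from the rank function~\eqref{eq:alg2_ext_rank_equation}, the first step is to pin down the worst future rank. Unlike the \alg{} long-job case, this rank is \emph{not} monotonic: it equals $\langle 1,-a\rangle$ and improves during the FCFS prefix $a\in[0,L)$, then jumps \emph{up} into class $2$ at age $L$ and improves again as $\langle 2,r-a\rangle$. Hence the worst rank over the remaining life is attained the instant $J$ enters class $2$, giving $r_{worst}=rank^{\text{worst}}(0)=\langle 2,r-L\rangle$, and more generally $rank^{\text{worst}}(a)=\langle 2,r-L\rangle$ for $a<L$ and $\langle 2,r-a\rangle$ for $a\ge L$.

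Next I would compute the transformed work relative to $r_{worst}=\langle 2,r-L\rangle$. For the old-job quantities I classify an old job $I$ by whether $x_I\le L$ (a short job, original to completion, contributing $x_I$), or $x_I>L$ with $r_I\le r$ (still original to completion, contributing $x_I$), or $x_I>L$ with $r_I>r$ (original only for its length-$L$ FCFS prefix, then discarded, and \emph{recycled exactly once} when its predicted remaining first drops to $r-L$, i.e. at age $r_I-r+L$, then served to completion for $x_I-L-(r_I-r)$). Integrating squares against $f$ and $g$ produces the three $X_0^{\text{old}}$ terms plus the recycled term $a(r)$ in the numerator, and the first moment of $X_0^{\text{old}}$ gives $\rho^{ext}_{L,r}$, so the waiting-time denominator becomes $(1-\rho^{ext}_{L,r})^2$ as claimed.

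The residence integral is where I expect the real work, and it is the main obstacle. The delicate feature is that \emph{every} new job, short or long, must first traverse the length-$L$ FCFS prefix, during which it holds class-$1$ priority and therefore outranks the tagged class-$2$ job regardless of its eventual predicted size. Thus a new long job always delays $J$ for its first $L$ units and continues to completion only if, upon entering class $2$ at its own age $L$, its predicted remaining $r_K-L$ still beats $J$'s secondary rank $v$; that is, the effective threshold on $r_K$ is $v+L$, not $v$. Concretely I obtain $X^{\text{new}}_{x_K}[\langle 2,v\rangle]=\min(x_K,L)+(x_K-L)^+\,\mathds{1}(r_K<v+L)$, hence $\lambda\,\mathbb{E}[X^{\text{new}}[\langle 2,v\rangle]]=\rho^{ext}_{L,\,v+L}$. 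Substituting $v=r-L$ on $[0,L)$ and $v=r-a$ on $[L,x_J]$, and then changing variables, assembles the residence term; the care needed to carry this FCFS-prefix shift through correctly (so that one does not simply reuse the pure-SPRPT threshold $r-a$ together with the constant-rank FCFS portion of the integral) is the crux of the argument. I would finish by plugging all pieces into Theorem~\ref{soaptheorem} and sanity-checking the final expression against Lemma~\ref{lem:sprpt_ext} in the limit $L\to0$, where the FCFS prefix and its shift vanish and \algtwo{} collapses to SPRPT.
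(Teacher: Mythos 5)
Your overall route is the paper's route: a SOAP tagged-job analysis via Theorem~\ref{soaptheorem}, with the identical three-way classification of old jobs (true-short, long with $r_I\le r$, long with $r_I>r$ recycled exactly once at remaining size $x_I-L-(r_I-r)$). Your old-job computations reproduce the numerator and the factor $(1-\rho^{ext}_{L,r})^2$ exactly, and your identification of the worst future rank is actually \emph{more} careful than the paper's: the proof in Appendix~D asserts the rank is monotone and sets $r_{worst}=\langle 2,r\rangle$, yet its own original/recycled bookkeeping (original iff $r_I\le r$; recycled at age $r_I-r+L$) is only consistent with your value $r_{worst}=\langle 2,r-L\rangle$. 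On the waiting-time term the two discrepancies cancel, so you and the paper agree there.

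The substantive divergence is in $X^{\text{new}}$, and you should be aware that your version does not reproduce the lemma as stated. The paper takes a new long job $K$ to delay $J$ by $x_K\,\mathds{1}(r_K<r-a)+L\,\mathds{1}(r_K\ge r-a)$, which gives $\lambda\mathbb{E}[X^{\text{new}}[\mathrm{rank}^{\text{worst}}(a)]]=\rho^{ext}_{L,r-a}$ and hence the residence integral $\int_0^{x_J}(1-\rho^{ext}_{L,(r-a)^+})^{-1}\,da$ appearing in the lemma. Your shifted threshold $r_K<v+L$ is the faithful application of SOAP to the rank function~\eqref{eq:alg2_ext_rank_equation}: a new long job only acquires secondary rank $r_K-L$ upon entering class~$2$ at its own age $L$, so it is served to completion iff $r_K-L\le r-a$, not iff $r_K\le r-a$ (the paper's condition compares $K$'s predicted \emph{total} size against $J$'s predicted \emph{remaining} size). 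Combined with the flat segment $\mathrm{rank}^{\text{worst}}(a)=\langle 2,r-L\rangle$ on $[0,L)$, your residence term becomes $L/(1-\rho^{ext}_{L,r})+\int_0^{x_J-L}(1-\rho^{ext}_{L,(r-a)^+})^{-1}\,da$, which is strictly larger than, and not equal to, the integral in the statement (they coincide only as $L\to 0$, your SPRPT sanity check). So if you carry your plan through you will prove a corrected formula rather than Lemma~\ref{lem:delayp_ext_l} verbatim; to obtain the stated formula you would have to adopt the paper's unshifted comparison, which contradicts the rank function you are analyzing. You should state explicitly which of the two you intend to establish.
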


\begin{restatable}{lemma}{lemmadelaypsrvl}
\label{lem:delayp_srv_l}
    For \algtwo{} policy in the server cost model, the expected mean response time for a long job of true size $x_J$ and predicted size $r$ is
\begin{align*}
    \mathbb{E}[T(x_J, r)]_{srv}^{{\algtwo{}, L}}  = &\frac{\lambda}{2(1-\rho^{srv}_{L,r})^2} \Bigg(\int_{x=0}^{L} x^2 f(x) dx + \int_{y = 0}^{r} \int_{x = L}^{\infty} (x+c_2)^2 \cdot g(x,y) dx dy \\
    & + \int_{y = r}^{\infty} \int_{x = L}^{\infty} (L + c_2)^2 \cdot g(x,y) dx dy \\
    &+\int_{t = r}^{\infty} \int_{x =L+ t-r}^{\infty} g(x, t)\cdot(x -L- (t - r))^2 \cdot dx dt\Bigg) +  \int_{0}^{x_J} \frac{1}{1-\rho^{srv}_{L,(r-a)^+}} \, da,
\end{align*} where $\rho^{srv}_{L, r} = \lambda \left(\int_{x=0}^{L} x f(x) dx + \int_{y = 0}^{r} \int_{x = L}^{\infty} (x+c_2) \cdot g(x,y) dx dy + \int_{y = r}^{\infty} \int_{x = L}^{\infty} (L + c_2) \cdot g(x,y) dx dy\right)$
is the load due to short jobs, predictions for long jobs, long jobs predicted to be less than $r$, and other long jobs with their size limited at $L$. Here $(r-a)^+ = max(r-a, 0)$.

\end{restatable}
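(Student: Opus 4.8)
The plan is to apply the SOAP machinery (Theorem~\ref{soaptheorem}) to the rank function~\eqref{srv:alg2_rank_equation}, mirroring the external long-job analysis of Lemma~\ref{lem:delayp_ext_l}; the only genuinely new ingredient is accounting for the $c_2$ units of server time that every \emph{long} job spends on its expensive prediction. First I would fix the tagged long job $J$ with descriptor $[x_J,r]$ (so $x_J>L$ and $r>L$) and determine its worst future rank. Although~\eqref{srv:alg2_rank_equation} is non-monotonic across its three phases (short service, prediction, SPRPT), the first coordinate only increases from $1$ to $2$ to $3$, so the worst rank $J$ ever attains lives in the final phase $\langle 3, r-a\rangle$; hence $rank_{d_J}^{\text{worst}}(a)=\langle 3, r-a\rangle$ and $r_{worst}=\langle 3,r\rangle$ in the convenience parametrization, just as in the external case. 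Because $c_1=0$ here and the secondary rank $r-a$ is unaffected by the fixed $c_2$ age-shift (as noted before~\eqref{srv:alg2_rank_equation}), this step matches Lemma~\ref{lem:delayp_ext_l} in form.

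Next I would compute the three transformed-service quantities. For $X^{\text{new}}[\langle 3,r-a\rangle]$: a new job $K$ with true size below $L$ is short, never obtains a prediction, and (being in class $1$) preempts $J$ fully, contributing $x_K$; a new long job ($x_K\ge L$) additionally spends $c_2$ on its prediction and then either preempts to completion (contributing $c_2+x_K$ when its predicted size is below $r-a$) or only runs through the length-$L$ phase before being deprioritized (contributing $c_2+L$). Taking expectations over $g(x,y)$ recovers $\rho^{srv}_{L,(r-a)^+}$. For $X_0^{\text{old}}[\langle 3,r\rangle]$, an old short job stays original until completion (weight $x_I$); a long job predicted below $r$ stays original through its short phase, its prediction, and its SPRPT service, giving weight $x_I+c_2$; and a long job predicted above $r$ is original only through its short phase and prediction, giving weight $L+c_2$ before it is discarded. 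Their second moments yield the first three bracket terms, which differ from the external lemma exactly by the substitutions $x\mapsto x+c_2$ and $L\mapsto L+c_2$ on the long-job pieces. Finally $X_1^{\text{old}}$ is unchanged: only long jobs with predicted size $t=r_I>r$ recycle once, re-entering at age $r_I-r$ and running to completion for a residual $x_I-L-(r_I-r)$, while $X_i^{\text{old}}=0$ for $i\ge 2$; the $c_2$ does not appear because it was already consumed during the original/discarded phase, so this reproduces the $a(r)$-type integral verbatim.

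With these three quantities in hand, substituting into Theorem~\ref{soaptheorem} --- summing $\mathbb{E}[(X_0^{\text{old}})^2]$ and $\mathbb{E}[(X_1^{\text{old}})^2]$ in the numerator, using $\mathbb{E}[X_0^{\text{old}}]$ and $\mathbb{E}[X^{\text{new}}]$ in the two denominators, and appending the age integral $\int_0^{x_J}(1-\rho^{srv}_{L,(r-a)^+})^{-1}\,da$ --- gives the stated formula. I expect the main obstacle to be the careful bookkeeping of where the $c_2$ server time does and does not enter: one must verify that it is charged to long jobs only in $X^{\text{new}}$ and $X_0^{\text{old}}$, that short jobs (those completing before $L$) never incur it, and --- most easily overlooked --- that it is \emph{absent} from the recycled residual in $X_1^{\text{old}}$, since the prediction work strictly precedes recycling. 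A secondary subtlety is justifying the worst-future-rank reduction for the phased rank function and confirming that using $r-a$ rather than the true remaining $r-a\pm c_2$ leaves every threshold comparison above intact.
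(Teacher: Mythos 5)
Your proposal follows the paper's proof essentially step for step: the same worst-future-rank $\langle 3, r-a\rangle$, the same case analysis for $X^{\text{new}}$ and $X_0^{\text{old}}$ (with $c_2$ charged to long jobs and the $L+c_2$ truncation for those predicted above $r$), the same once-only recycled residual $x_I - L - (t-r)$ with no $c_2$ term, and the same final substitution into Theorem~\ref{soaptheorem}. The approach and all bookkeeping match the paper's argument.
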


\begin{figure*}[t]
    \centering
    \begin{tabular}{ccc}
        \subfloat[large cost gap, $c_1=0.5, c_2 = 4$]{\includegraphics[width = \matrixCellWidth]{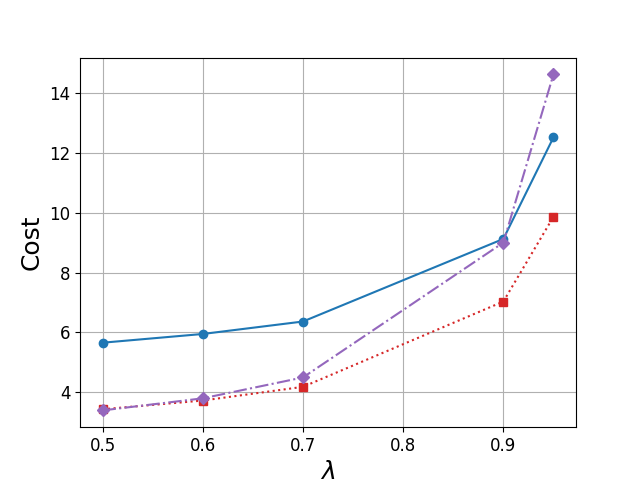}\label{fig:delayp_lowcosts}} &
        \subfloat[small cost gap, $c_1=3.5, c_2 = 4$]{\includegraphics[width = \matrixCellWidth]{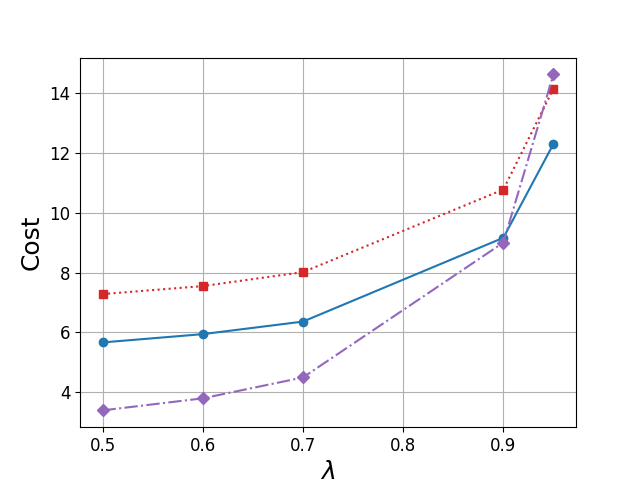}\label{fig:delatp_hightcosts}} &
        \subfloat[cost vs. threshold]{\includegraphics[width=\matrixCellWidth]{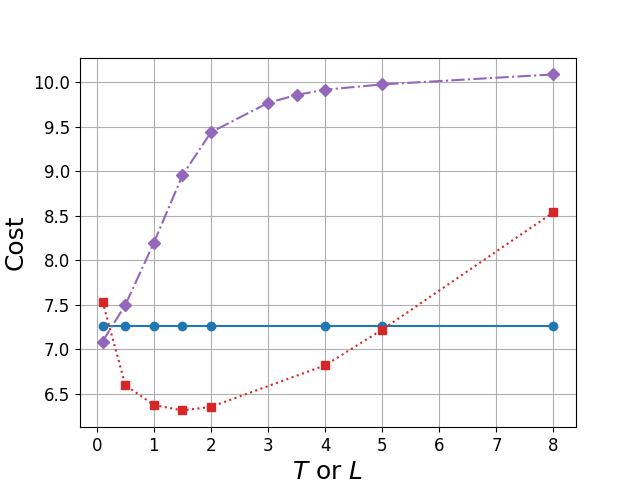}\label{fig:delayp_T}}
        \tabularnewline
        \multicolumn{3}{c}{\subfloat{\includegraphics[width=0.45\linewidth]{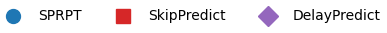}}}	

    \end{tabular}
    \caption{\algtwo{} vs. \alg{} and SPRPT in the external cost model (a) cost vs. arrival rate with $c_1=0.5, c_2=4$, $T=L=1$ (b) cost vs. arrival rate with $c_1=3.5, c_2=4$, $T=L=1$ (c) cost vs. threshold ($T$ for \alg{} and $L$ for \algtwo{}), $c_1=0.5, c_2=2$, $\lambda = 0.9$.}
    \label{fig:delaypredict}
\end{figure*}

We provide proofs of Lemmas~\ref{lem:delayp_ext_s}, \ref{lem:delayp_ext_l} and~\ref{lem:delayp_srv_l} in Appendix~\ref{appendix:delayp_proofs}.

\subsection{\algtwo{} vs. \alg{}}
The main difference between \algtwo{} and \alg{} is in the waiting time. In \alg{}, a predicted short job sees only short jobs while a short job in \algtwo{} sees all short jobs in the queue ahead of it, limited to size $L$. Similarly, in \algtwo{} a long job has to wait behind incoming jobs (including all other long jobs) for at lest time $L$. Thus, while \algtwo{} saves the cheap predictions, its waiting time can be higher than \alg{}.
Figure~\ref{fig:delaypredict} shows a setting where there is a cost gap between prediction costs, and \alg{} outperforms \algtwo{}. However, \algtwo{} still performs better than SPRPT (with $c_1=0.5, c_2=4$). When the costs are close, \algtwo{} is better than both \alg{} and SPRPT, as in this case \alg{} is less effective. Since \algtwo{} waiting time depends on $L$, we see, Figure~\ref{fig:delayp_T}, that as $L$ increases, the cost of \algtwo{} gets higher.  (Note for comparison purposes the $L$ for \algtwo and the $T$ for \alg are chosen to be the same value.)

\section{Variants of \alg{}}
\label{sec:variants}
While here we have focused on analyzing \alg{} in cases where predictions have costs (either external or in server time), we believe that this work is a starting point;  there are many variations of prediction-based scheduling to consider.  We offer some possible variants of \alg{}.  

\begin{itemize}
\item One could analyze systems using separate servers explicitly for prediction. This introduces challenges, depending on the model for the service time for predictions, of dependence between the queues.  
\item Cheap predictions could provide a richer classification than just short or long; one could imagine $k$-bit priorities from cheap predictions and different scheduling for the $2^k$ classes. 
\item Predicting only some of the jobs, say each with some probability, could reduce prediction costs while still providing gains. (Jobs without predictions could be served FCFS with priority between predicted short and long jobs, for example.)
\item Load-based predictions, where predictions are used only when the number of jobs in the queue reaches some limit $L$ (until emptying, or reaches some lower level), seems intuitively useful. When there are many jobs, the gains from ordering the jobs are likely to be higher, to pay the cost of the predictions.  Such schemes are not readily analyzed using the SOAP framework, however, as job descriptors are assumed to be independent of system load.  
\end{itemize}

\section{Conclusion}
\label{sec:discussion}
We have presented \alg{}, the first prediction-based scheduling policy we are aware of that takes into account the cost of prediction. \alg{} is designed for systems where two levels of prediction are available, good (and cheap) and better (but expensive) prediction. While here we have focused on having a binary prediction (short/long) and a prediction of the service time, our framework would also work for other settings.  For example, both the cheap and expensive predictions could be for the service time, with the expensive prediction being a more refined, time-consuming variation of the cheaper process (that even takes the cheap prediction as an input).  
We considered the cost of prediction in scheduling in two models; the external cost model with externally generated predictions, and the server time cost model where predictions require server time and are scheduled alongside jobs. We derived the response time of \alg{} and analyzed total cost formulae in the two cost models for \alg{}. We similarly derived formulae in these models for previously proposed prediction policies where previous analyses ignored prediction costs, namely 1bit and SPRPT, as well as a new policy, \algtwo. We have demonstrated that \alg{} potentially outperforms FCFS, 1bit, SPRPT, and \algtwo in both cost models, especially when there is a significant cost gap between cheap and expensive predictions.

\ifdefined\arXiv
\section*{Acknowledgments}
Rana Shahout was supported in part by Schmidt Futures Initiative and Zuckerman Institute. Michael Mitzenmacher was supported in part by NSF grants CCF-2101140, CNS-2107078, and DMS-2023528.
\fi



\pagebreak

\ifdefined \arvix
\bibliographystyle{plain}
\else
\bibliographystyle{ACM-Reference-Format}
\fi

\bibliography{refs}

\appendix
\appendix
\section{SPRPT Proofs}
\label{appendix:sprpt_proofs}

\lemmasprptext*

\begin{proof}
    
SPRPT has a rank function $rank(r, a) = r - a$ for job of size $x$ and predicted size $r$. As this rank function is monotonic, $J$'s worst future rank is its initial prediction:
\[ rank_{d_J, x_J}^{\text{worst}} (a)= r -a.\] When $a_J=0$, the rank function is denoted by $r_{worst}= rank_{d_J, x_J}^{\text{worst}} (0) = r$.

\subsubsection{$X^{\text{new}}[rank_{d_J, x_J}^{\text{worst}} (a)]$ computation:}
Suppose that a new job $K$ of predicted size $r_K$ arrives when $J$ has age $a$. 
If $K$ has a predicted job size less than $J$'s predicted remaining process time ($r - a$), $K$ will always outrank $J$. Thus

\[X_{x_K}^{new}[r - a]  = x_K \mathds{1}(r_K < r - a)\]

\begin{align*}
\mathbb{E}[X^{new}[r - a]] = \int_{0}^{r-a} \int_{x_K = 0}^{\infty} x_K \cdot g(x_K,y) dx_K dy \\
\end{align*}

\subsubsection{$X_{0}^{\text{old}}[r_{worst}]$ computation:}
Whether job $I$ is an original or recycled job depends on its predicted size relative to J's predicted size.
If $r_I \le r$, then $I$ is original until its completion because its rank never exceeds $r$.

\[
X_{0, x_I}^{\text{old}}[r] = x_I\mathds{1}(r_I \le r).
\]

\begin{align*}
\mathbb{E}[X_{0}^{\text{old}}[r]] = \int_{y = 0}^{r} \int_{x_I = 0}^{\infty} x_I \cdot g(x_I,y) dx_I dy.
\end{align*}

\begin{align*}
\mathbb{E}[(X_{0}^{\text{old}}[r])^2]] = \int_{y = 0}^{r} \int_{x_I = 0}^{\infty} x_I^2 \cdot g(x_I, y) dx_I dy.
\end{align*}

\subsubsection{$X_{i}^{\text{old}}[r_{worst}]$ computation:}
If $r_I > r$, then $I$ starts discarded but becomes recycled when $r_I - a = r$. This means at age $a = r_I -r$ and served till completion, which will be $x_I - a_I = x_I - (r_I - r)$,  let $t = r_I$:

Thus, we have
\[
X_{1, x_I}^{\text{old}}[r] = x_I - (t - r).
\]
For $i \ge 2$, $$X_{i, x_I}^{\text{old}}[r] = 0.$$

\begin{align*}
\mathbb{E}[X_{1}^{\text{old}}[r]^2] = \int_{t = r}^{\infty} \int_{x_I = t-r}^{\infty} g(x_I, t) \cdot (x_I - (t - r))^2 \cdot dx_I dt.
\end{align*}

Applying Theorem 5.5 of SOAP~\cite{scully2018soap} yields that the mean response time of jobs with descriptor ($r$) and size $x_J$ is as follows.
Let
$$\rho'_r= \lambda \int_{y = 0}^{r} \int_{x_I = 0}^{\infty} x_I \cdot g(x_I,y) dx_I dy.$$
Then

\begin{align*}
\mathbb{E}[T(x_J, r)]_{ext}^{{SPRPT}} = &\frac{\lambda \left(\int_{y = 0}^{r} \int_{x_I = 0}^{\infty} x_I^2\cdot  g(x_I, y) dx_I dy + \int_{t = r}^{\infty} \int_{x_I = t-r}^{\infty} g(x_I, t) \cdot(x_I - (t - r))^2 \cdot dx_I dt\right)}{2(1-\rho'_r)^2} \\
&+ \int_{0}^{x_J} \frac{1}{1-\rho'_{(r-a)^+}} \, da.
\end{align*}

Let $f_p(y) = \int_{x=0}^{\infty} g(x,y) dx$. Then the mean response time for a job with size $x_J$, and the mean response time over all jobs are given by 
$$\mathbb{E}[T(x_J)] = \int_{y=0}^{\infty} f_p(y) \mathbb{E}[T(x_J,y)] dy,$$

$$\mathbb{E}[T]_{ext}^{{SPRPT}} = \int_{x=0}^{\infty} \int_{y=0}^{\infty} g(x,y)  \mathbb{E}[T(x,y)] dy dx.$$

\end{proof}

\lemmasprptsrv*

\begin{proof}

$X^{\text{new}}[rank_{d_J}^{\text{worst}} (a)]$:
$J$'s worst future rank is $\langle 2, r - a_J\rangle$. In this case, $J$'s delay due to a new job $K$ is $c_2$ plus $x_K$ is its predicted service time less than $J$'s remaining process time.
\[X_{x_K}^{new}[\langle 2, r - a_J\rangle]  = 
c_2 + x_K \mathds{1}(r_K < r - a_J).
\]

\begin{align*}
\mathbb{E}[X^{new}[\langle 2, r - a_J\rangle] = c_2 + \int_{0}^{r-a} \int_{x_K = 0}^{\infty} x_K \cdot g(x_K,y) dx_K dy. \\
\end{align*}

\subsubsection{$X_{0}^{\text{old}}[r_{worst}]$ computation:}
In this model, old job $I$ delays $J$ at least $c_2$. In addition,
If $r_I \le r$, then $I$ is original until its completion because its rank never exceeds $r$.

\[
X_{0, x_I}^{\text{old}}\langle 2, r - a_J\rangle = c_2 + x_I\mathds{1}(r_I \le r).
\]

\begin{align*}
\mathbb{E}[X_{0}^{\text{old}}\langle 2, r - a_J\rangle] = c_2 + \int_{y = 0}^{r} \int_{x_I = 0}^{\infty} x_I \cdot g(x_I,y) dx_I dy. 
\end{align*}

\begin{align*}
\mathbb{E}[(X_{0}^{\text{old}}\langle 2, r - a_J\rangle)^2]] =  \int_{y = r}^{\infty} \int_{x = 0}^{\infty} c_2^2 \cdot g(x, y)  dx dy + \int_{y = 0}^{r} \int_{x_I = 0}^{\infty} (c_2 + x_I)^2 \cdot g(x_I, y) dx_I dy.
\end{align*}

\subsubsection{$X_{i}^{\text{old}}[r_{worst}]$ computation:}
If $r_I > r$, then $I$ starts discarded but becomes recycled when $r_I - a = r$. This means at age $a = r_I -r$ and served till completion, which will be $x_I - a_I = x_I - (r_I - r)$,  let $t = r_I$:

Thus, we have
\[
X_{1, x_I}^{\text{old}}\langle 2, r - a_J\rangle = x_I - (t - r).
\]
For $i \ge 2$, $$X_{i, x_I}^{\text{old}}\langle 2, r - a_J\rangle = 0,$$

\begin{align*}
\mathbb{E}[X_{1}^{\text{old}}\langle 2, r - a_J\rangle^2] = \int_{t = r}^{\infty} \int_{x_I = t-r}^{\infty} g(x_I, t) \cdot (x_I - (t - r))^2 \cdot dx_I dt.
\end{align*}

Applying Theorem 5.5 of SOAP~\cite{scully2018soap} yields that the mean response time of jobs with descriptor ($r$) and size $x_J$ is as follows.
Let
$$\rho''_r= \lambda \left( c_2 + \int_{y = 0}^{r} \int_{x_I = 0}^{\infty} x_I \cdot g(x_I,y) dx_I dy\right).$$
Then

\begin{align*}
\mathbb{E}[T(x_J, r)]_{srv}^{{SPRPT}} = &\frac{\lambda}{{2(1-\rho''_r)^2}} \Bigg(\int_{y = r}^{\infty} \int_{x = 0}^{\infty} c_2^2 \cdot g(x, y)  dx dy + \int_{y = 0}^{r} \int_{x_I = 0}^{\infty} (c_2 + x_I)^2 \cdot g(x_I, y) dx_I dy \\ &+ \int_{t = r}^{\infty} \int_{x_I = t-r}^{\infty} g(x_I, t) \cdot(x_I - (t - r))^2 \cdot dx_I dt)\Bigg)
+ \int_{0}^{x_J} \frac{1}{1-\rho''_{(r-a)^+}} \, da
\end{align*}

Let $f_p(y) = \int_{x=0}^{\infty} g(x,y) dx$. The mean response time for a job with size $x_J$ and the mean time for a general job are give by
$$\mathbb{E}[T(x_J)] = \int_{y=0}^{\infty} f_p(y) \mathbb{E}[T(x_J,y)] dy,$$

$$\mathbb{E}[T]_{ext}^{{SPRPT}} = \int_{x=0}^{\infty} \int_{y=0}^{\infty} g(x,y)  \mathbb{E}[T(x,y)] dy dx.$$

\end{proof}

\section{1bit Proofs}
\label{appendix:1bit_proofs}

\lemmabitext*
\begin{proof}
To analyze 1-bit advice for predicted long job in the external cost model using SOAP, we first find the worst future rank and then calculate $X^{\text{new}}[rank_{d_J}^{\text{worst}} (a)]$, $X_{0}^{\text{old}}[r_{worst}]$ and $X_{i}^{\text{old}}[r_{worst}]$ for predicted long job.
For predicted long jobs, the rank function is monotonic as described in \eqref{eq:ext_1bit_rank_equation}, therefore J's worst future rank is its initial rank.
\[ rank_{d_J}^{\text{worst}} (a)=  \langle 2, -a \rangle, \] and  $r_{worst}= rank_{d_J}^{\text{worst}} (0) = \langle 2, 0 \rangle$.

$X^{\text{new}}[rank_{d_J}^{\text{worst}} (a)]$:
Suppose that a new job $K$ arrives when $J$ has age $a_J$.
$J$'s delay due to $K$ depends on $K$ is predicted to be short or long. 

Only if $K$ is predicted short then it will preempt $J$ and be scheduled till completion because it has a higher class as long jobs are scheduled also according to FCFS so in case of $K$ predicted long it will not preempt $J$

\[X_{x_K}^{new}[\langle 2, - a \rangle]  = 
\begin{cases}
    x_K   & \text{$K$ is predicted short. } \\
\end{cases}
\]

\begin{align*}
\mathbb{E}[X^{new}[\langle 2,- a \rangle]] = & \int_{0}^{\infty} p_T(x) x f(x) dx.
\end{align*}

$X_{0}^{\text{old}}[r_{worst}]$:
Now if old job $I$ either predicted short or long is an original job then it remains original until its completion (regardless of if it is predicted long or short). Thus, for $i \ge 1$, $X_{i, x_I}^{\text{old}}[\langle 2, 0 \rangle] = 0$.

\[
X_{0, x_I}^{\text{old}}[\langle 2, 0 \rangle] = x_I.
\]

\begin{align*}
\mathbb{E}[X_{0}^{\text{old}}[\langle 2, 0 \rangle]] = & \int_{0}^{\infty} x f(x) dx.
\end{align*}

\begin{align*}
\mathbb{E}[(X_{0}^{\text{old}}[\langle 2, 0 \rangle])^2]] = &  \int_{0}^{\infty} x^2 f(x) dx.
\end{align*}

Applying Theorem 5.5 of SOAP~\cite{scully2018soap} yields the result:
\begin{align*}
    \mathbb{E}[T(x_J)]_{ext}^{{PL}} = &\frac{\lambda}{2(1-\rho)(1-\rho^{ext}_{new})} \int_{0}^{\infty} x^2 f(x) \, dx + \int_{0}^{x_J} \frac{1}{1-\rho^{ext}_{new}} \, da,
\end{align*}
where $\rho^{ext}_{new} = \lambda \left(\int_{0}^{\infty} p_T(x) x f(x) dx \right)$.

As a second way of thinking about this proof, it can be said that this is the original FCFS system with slowdowns caused by subsystem 1 (predicted short jobs) which is $\frac{1}{1-\rho^{ext}_{new}}$.

\end{proof}

\lemmabitsrv*

\begin{proof}

In this case, according to \eqref{srv:rank_1bit_equation}, J's worst future rank is
\[ rank_{d_J}^{\text{worst}} (a)=  \langle 3, -a \rangle \] and  $r_{worst}= rank_{d_J}^{\text{worst}} (0) = \langle 3, 0 \rangle$.

$X^{\text{new}}[rank_{d_J}^{\text{worst}} (a)]$:
Let's say $J$ has age $a_J$ when $K$ arrives. The delay caused by $K$ for $J$ depends on whether it is predicted to be short or long for $K$. If $K$ is predicted short then it will preempt $J$ and be scheduled along with its cheap prediction till completion. Otherwise, if $K$ is predicted long, it will delay $J$ only for the cheap prediction.

\[X_{x_K}^{new}[\langle 3, - a_J \rangle]  = 
\begin{cases}
    c_1 + x_K   & \text{$K$ is predicted short, } \\
    c_1 & \text{$K$ is predicted long. }
\end{cases}
\]

\begin{align*}
\mathbb{E}[X^{new}[\langle 3,- a \rangle]] = & c_1 + \int_{0}^{\infty} p_T(x) x f(x) dx.
\end{align*}

$X_{0}^{\text{old}}[r_{worst}]$:
Each old job is scheduled for cheap prediction (which costs $c_1$), and an old job $I$, regardless of whether it is predicted long or short remains original until its completion. Thus, for $i \ge 1$, $X_{i, x_I}^{\text{old}}[\langle 3, 0 \rangle] = 0$.

\[
X_{0, x_I}^{\text{old}}[\langle 3, 0 \rangle] = c_1 + x_I
\]

\begin{align*}
\mathbb{E}[X_{0}^{\text{old}}[\langle 3, 0 \rangle]] = & \int_{0}^{\infty} (c_1 + x) f(x) dx
\end{align*}

\begin{align*}
\mathbb{E}[(X_{0}^{\text{old}}[\langle 3, 0 \rangle])^2]] = &  \int_{0}^{\infty} (c_1 + x)^2 f(x) dx
\end{align*}

Using Theorem 5.5 of SOAP~\cite{scully2018soap} yields the result:
\begin{align*}
    \mathbb{E}[T(x_J)]_{srv}^{{1bit, PL}} = &\frac{\lambda}{2(1-\rho_{c_1})(1-\rho^{srv}_{new})} \int_{0}^{\infty} (x+c_1)^2 f(x) \, dx + \int_{0}^{x_J} \frac{1}{1-\rho^{srv}_{new}} \, da,
\end{align*}
where $\rho^{srv}_{new} = \lambda \left(c_1 + \int_{0}^{\infty} p_T(x) x f(x) dx \right) \quad \rho_{c_1} = \lambda\left(\int_{0}^{\infty} (x + c_1) f(x) dx \right) $.

\end{proof}

\section{Simulation}
\label{appendix:simulation}

Here we present some additional simulations of the exponentially distributed service time with the exponential predictor and of the Weibull distribution using the uniform predictor. Table~\ref{tab:prediction_models} summarizes the quantities $p_T(x)$ and $g(x,y)$ for our prediction models.

\begin{table}[h]
\centering
\begin{tabular}{l l l}
\hline
\textbf{Model} & \( p_T(x) \) & \( g(x,y) \) \\
\hline
Perfect Prediction & \( 1 \text{ if } x < T \) & \( e^{-x} \) \\
Exponential Prediction & \( 1 - e^{-(\frac{T}{x})} \) & \( e^{\frac{-x-y}{x}} \) \\
Uniform Prediction & \( \begin{cases} 
0 & \text{if } T \leq (1 - \alpha) x, \\
1 & \text{if } T \geq (1 + \alpha) x, \\
\frac{T - (1 - \alpha) x}{2 \alpha x} & \text{otherwise}
\end{cases} \) & \( \frac{1}{2\alpha x}e^{-x} \) \\
\hline
\end{tabular}
\caption{Prediction Models and their Functions}
\label{tab:prediction_models}
\end{table}

\begin{figure}[H]
    \centering
    \begin{tabular}{ccc}
        \subfloat[cost vs $c_2$ - External]{\includegraphics[width = \matrixCellWidth]{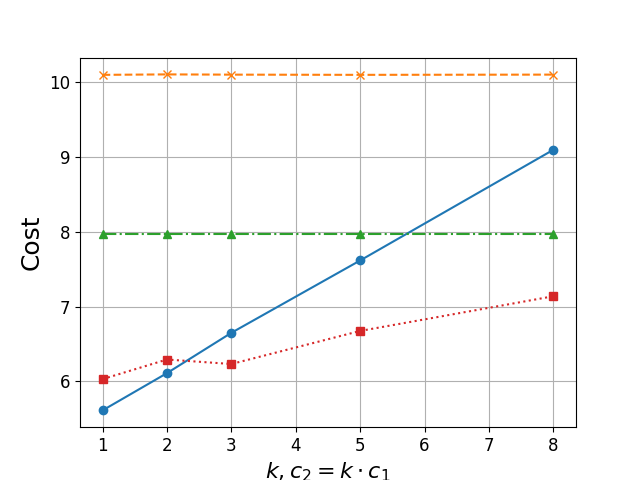}} &
        \subfloat[cost vs T - External]{\includegraphics[width = \matrixCellWidth]{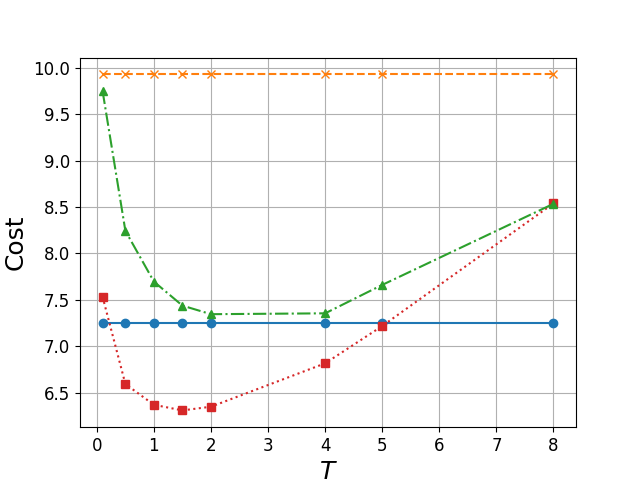}\label{fig:ext_T_expP}} &
        \subfloat[cost vs. $\lambda$ - External]{\includegraphics[width=\matrixCellWidth]{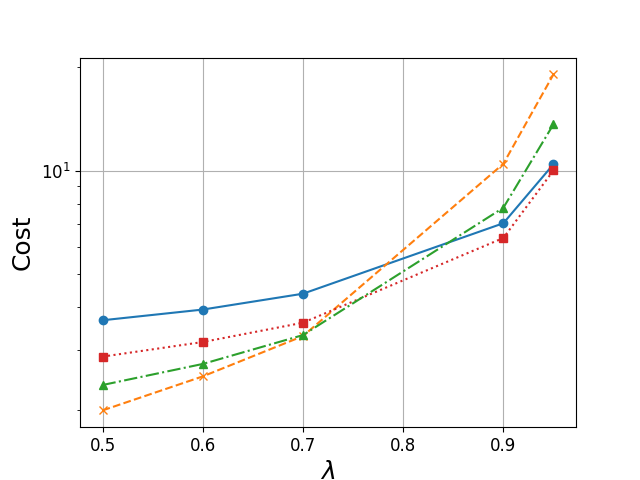}\label{fig:ext_arriving_expP}} \\
        \subfloat[cost vs $c_2$- Server]{\includegraphics[width = \matrixCellWidth]{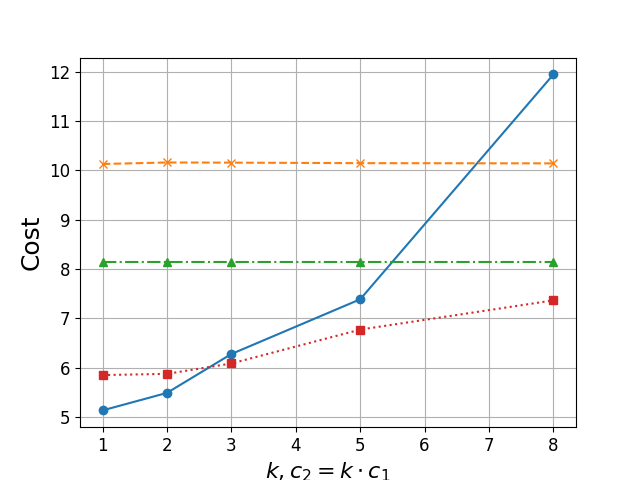}\label{fig:srv_ratio_expP}} &
        \subfloat[cost vs T - Server]{\includegraphics[width = \matrixCellWidth]{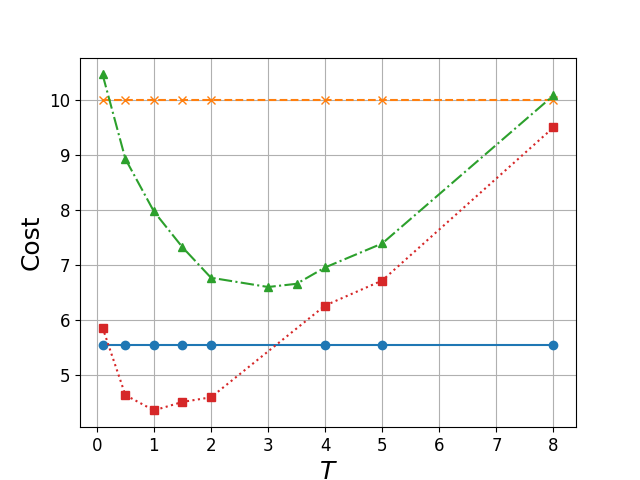}\label{fig:srv_T_expP}} &
        \subfloat[cost vs. $\lambda$- Server]{\includegraphics[width=\matrixCellWidth]{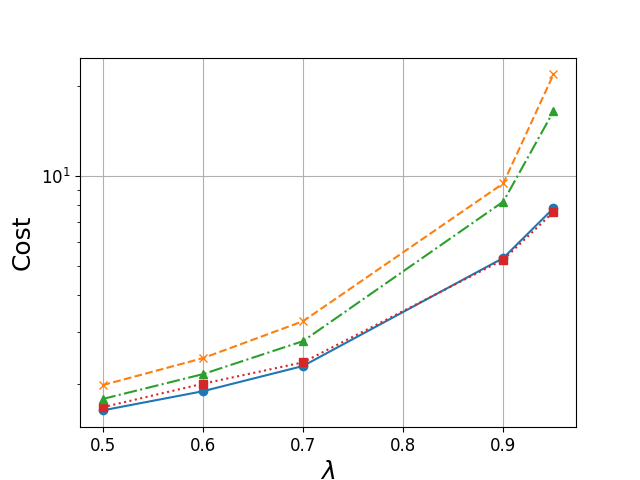}\label{fig:srv_arriving_expP}}
        \tabularnewline
        \multicolumn{3}{c}{\subfloat{\includegraphics[width=0.5\linewidth]{figs/graphs/legend.png}}}	

    \end{tabular}
    \caption{Cost in the external cost and server cost models using exponential predictor for both cheap and expensive predictors, service times are distributed exponentially with mean $1$. The default costs for the external model are $c_1= 0.5, c_2=2$ and for the server cost model are $c_1= 0.01, c_2=0.05$ (a + d) Cost vs. $c_2$ when $\lambda = 0.9$ and $T=1$ (b + e) Cost vs. $T$ when $\lambda = 0.9$ (c + f) Cost vs. $\lambda$ when $T=1$.}
    \label{fig:sim_expP_dist_exp}
\end{figure}

\begin{figure}[H]
    \centering
    \begin{tabular}{ccc}
        \subfloat[cost vs $c_2$ - External]{\includegraphics[width = \matrixCellWidth]{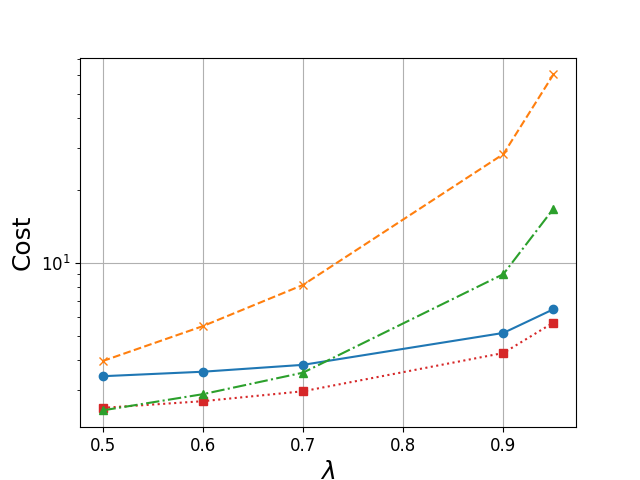}\label{fig:ext_ratio_weibull_uniform}} &
        \subfloat[cost vs T - External]{\includegraphics[width = \matrixCellWidth]{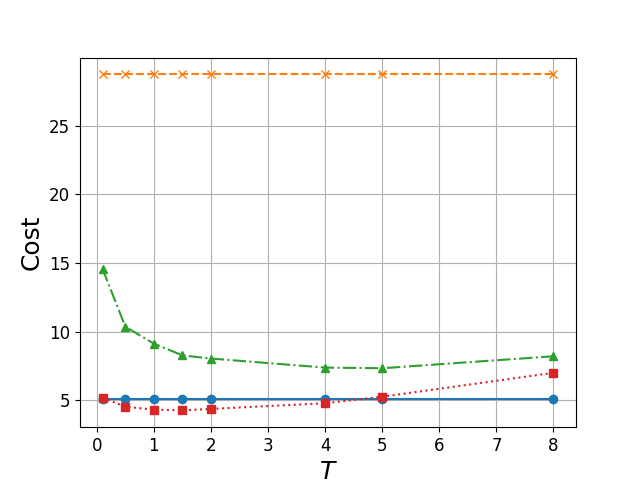}\label{fig:ext_T_weibull_uniform}} &
        \subfloat[cost vs. $\lambda$ - External]{\includegraphics[width=\matrixCellWidth]{figs/graphs/model1/weibull_dist/cost_vs_arrivalrate_2p_uniP_T_1.0_c1_0.5_c2_2_weilbull.png}\label{fig:ext_arriving_weibull_uniform}} \\
        \subfloat[cost vs $c_2$- Server]{\includegraphics[width = \matrixCellWidth]{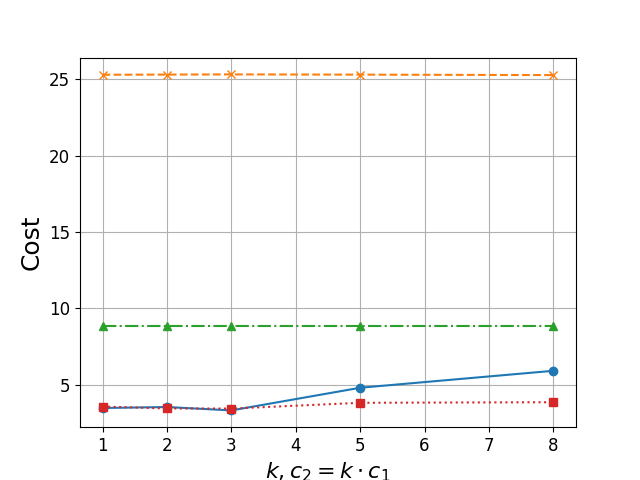}\label{fig:srv_ratio_weibull_uniform}} &
        \subfloat[cost vs T - Server]{\includegraphics[width = \matrixCellWidth]{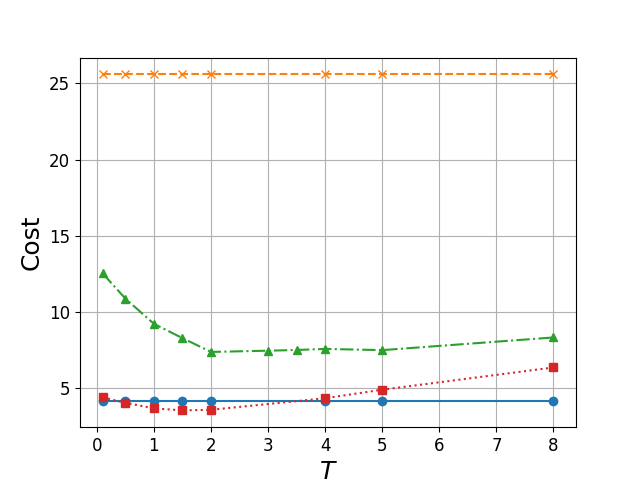}\label{fig:srv_T_weibull_uniform}} &
        \subfloat[cost vs. $\lambda$- Server]{\includegraphics[width=\matrixCellWidth]{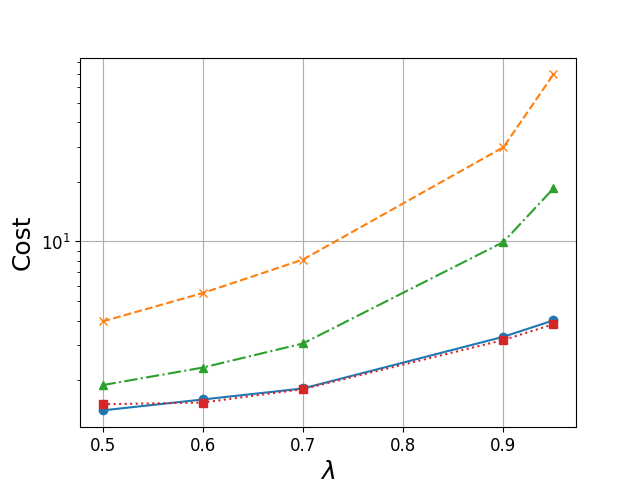}\label{fig:srv_arriving_weibull_uniform}}
        \tabularnewline
        \multicolumn{3}{c}{\subfloat{\includegraphics[width=0.5\linewidth]{figs/graphs/legend.png}}}	

    \end{tabular}
    \caption{Cost in the external cost and server cost models using the uniform predictor for Weibull service time. The default costs for the external model are $c_1= 0.5, c_2=2$ and for the server cost model are $c_1= 0.01, c_2=0.05$ (a + d) Cost vs. $c_2$ when $\lambda = 0.9$ and $T=1$ (b + e) Cost vs. $T$ when $\lambda = 0.9$ (c + f) Cost vs. $\lambda$ when $T=1$.}
    \label{fig:sim_uniP_dist_weibull}
\end{figure}

\section{\algtwo{} Proofs}
\label{appendix:delayp_proofs}

\lemmadelaypexts*

\begin{proof}
We first find the worst future rank and then calculate $X^{\text{new}}[rank_{d_J}^{\text{worst}} (a)]$, $X_{0}^{\text{old}}[r_{worst}]$ and $X_{i}^{\text{old}}[r_{worst}]$ for short jobs.
As both the external cost model and the server cost model treat short jobs the same, and their worst future rank is the same, the analysis holds for both.

For short jobs, the rank function is monotonic, therefore J's worst future rank is its initial rank:
\[ rank_{d_J}^{\text{worst}} (a)=  \langle 1, -a \rangle \] and  $r_{worst}= rank_{d_J}^{\text{worst}} (0) = \langle 1, 0 \rangle$.

$X^{\text{new}}[rank_{d_J}^{\text{worst}} (a)]$: Since short jobs have higher priorities than long jobs, and they are scheduled FCFS, a new job does not preempt $J$.  Hence $X_{x_K}^{new}[\langle 1, - a \rangle]  = 0$.

$X_{0}^{\text{old}}[r_{worst}]$:
If an old job $I$ is short, it remains original until it is completed. Otherwise, it remains original only for $L$ times. Thus, for $i \ge 1$, $X_{i, x_I}^{\text{old}}[\langle 1, 0 \rangle] = 0$ and

\[
X_{0, x_I}^{\text{old}}[\langle 1, 0 \rangle] =
\begin{cases}
    x_I   & \text{if $I$ is short} \\
    L & \text{if $I$ is long}
\end{cases}
\]

\begin{align*}
\mathbb{E}[X_{0}^{\text{old}}[\langle 1, 0 \rangle]] = & \int_{0}^{L} x f(x) dx + \int_{L}^{\infty} L f(x) dx
\end{align*}

\begin{align*}
\mathbb{E}[(X_{0}^{\text{old}}[\langle 1, 0 \rangle])^2]] = & \int_{0}^{L} x^2 f(x) dx + \int_{L}^{\infty} L^2 f(x) dx
\end{align*}

Applying Theorem 5.5 of SOAP~\cite{scully2018soap} yields the result
\begin{align*}
    \mathbb{E}[T]_{ext}^{{\algtwo{}, S}}  = &\frac{\lambda}{2(1-\rho_{L})} \Bigg(\int_{0}^{L} x^2 f(x) \, dx + \int_{L}^{\infty} L^2 f(x) dx \Bigg) + \int_{0}^{L} x f(x) dx,
\end{align*} where $\rho_{L} = \lambda \left(\int_{0}^{L} x f(x) dx + \int_{L}^{\infty} L f(x) dx\right)$.
\end{proof}

\lemmadelaypextl*

\begin{proof}

To analyze \algtwo{} for a long job in the external cost model using SOAP, we first find the worst future rank and then calculate $X^{\text{new}}[rank_{d_J}^{\text{worst}} (a)]$, $X_{0}^{\text{old}}[r_{worst}]$ and $X_{i}^{\text{old}}[r_{worst}]$ for long job.
As described in \eqref{eq:alg2_ext_rank_equation}, the rank function for long jobs is monotonic, and every job's rank is strictly decreasing with age, thus $J$'s worst future rank is its initial rank, here: $rank_{d_J}^{\text{worst}} (a)=  \langle 2, r -a \rangle $ and $r_{worst}= rank_{d_J}^{\text{worst}} (0) = \langle 2, r \rangle$.

$X^{\text{new}}[rank_{d_J}^{\text{worst}} (a)]$:
Suppose that a new job $K$ of predicted size $r_K$ arrives when $J$ has age $a_J$.
$J$'s delay due to $K$ depends on whether $K$ is short or long. If $K$ is short then it will preempt $J$, since it has a higher priority, and be scheduled till completion. If $K$ is long and has a predicted job size less than $J$'s predicted remaining process time ($r - a_J$), $K$ will preempt $J$ and proceed until completion. Otherwise, If $K$ is long and has a predicted job size more than $J$'s predicted remaining process time, it will preempt $J$ but will be scheduled only for $L$ time.

Thus

\[X_{x_K}^{new}[\langle 2, r - a \rangle]  = 
\begin{cases}
    x_K   & \text{$K$ is short } \\
    x_K \mathds{1}(r_K < r - a) + L \cdot \mathds{1}(r_K \ge r - a) & \text{$K$ is long}
\end{cases}
\]

\begin{align*}
\mathbb{E}[X^{new}[\langle 2, r - a \rangle]] = & \int_{x=0}^{L} x f(x) dx + \int_{y = 0}^{r-a} \int_{x = L}^{\infty} x \cdot g(x,y) dx dy + \int_{y = r-a}^{\infty} \int_{x = L}^{\infty} L \cdot g(x,y) dx dy\\
\end{align*}

$X_{0}^{\text{old}}[r_{worst}]$:
Whether another job $I$ is original or recycled depends on whether it is short or long, and in the case it is long, it also depends on its predicted size relative to J's prediction.
If $I$ is short, then it remains original until its completion. Alternatively, if $I$ is long, it is original until completion if $r_I \le r$, otherwise, it is original until $L$.

\[
X_{0, x_I}^{\text{old}}[\langle 2, r \rangle] = 
\begin{cases}
    x_K   & \text{$K$ is short } \\
    x_K \mathds{1}(r_K < r ) + L \cdot \mathds{1}(r_K \ge r ) & \text{$K$ is long}
\end{cases}
\]

\begin{align*}
\mathbb{E}[X_{0}^{\text{old}}[\langle 2, r \rangle]] = & \int_{x=0}^{L} x f(x) dx + \int_{y = 0}^{r} \int_{x = L}^{\infty} x \cdot g(x,y) dx dy + \int_{y = r}^{\infty} \int_{x = L}^{\infty} L \cdot g(x,y) dx dy
\end{align*}

\begin{align*}
\mathbb{E}[(X_{0}^{\text{old}}[\langle 2, r \rangle])^2]] = & \int_{x=0}^{L} x^2 f(x) dx + \int_{y = 0}^{r} \int_{x = L}^{\infty} x^2 \cdot g(x,y) dx dy + \int_{y = r}^{\infty} \int_{x = L}^{\infty} L^2 \cdot g(x,y) dx dy
\end{align*}

$X_{i}^{\text{old}}[r_{worst}]$:
If another job $I$ is long and if $r_I > r$, then $I$ starts discarded but becomes recycled when $r_I - a = r$. This starts at age $a = r_I -r$ and continues until completion, which will be $x_I -L - a_I = x_I -L - (r_I - r)$. Thus, for $i \ge 2$, $X_{i, x_I}^{\text{old}}[\langle 2, r \rangle] = 0$. Let $t = r_I$:

\[
X_{1, x_I}^{\text{old}}[\langle 2, r \rangle] =
\begin{cases}
    0   & \text{if $I$ is short} \\
    x_I -L - (t - r) & \text{if $I$ is long}
\end{cases}
\]

\begin{align*}
\mathbb{E}[X_{1}^{\text{old}}[\langle 2, r \rangle]^2] = \int_{t = r}^{\infty} \int_{x =L+ t-r}^{\infty} g(x, t)\cdot(x -L- (t - r))^2 \cdot dx dt
\end{align*}

Applying Theorem~\ref{soaptheorem} leads to the result.

\begin{align*}
    \mathbb{E}[T(x_J, r)]_{ext}^{{\algtwo{}, L}}  = &\frac{\lambda}{2(1-\rho^{ext}_{L,r})^2} \Bigg(\int_{x=0}^{L} x^2 f(x) dx \\
    &+ \int_{y = 0}^{r} \int_{x = L}^{\infty} x^2 \cdot g(x,y) dx dy + \int_{y = r}^{\infty} \int_{x = L}^{\infty} L^2 \cdot g(x,y) dx dy \\
    &+\int_{t = r}^{\infty} \int_{x =L+ t-r}^{\infty} g(x, t)\cdot(x -L- (t - r))^2 \cdot dx dt\Bigg) +  \int_{0}^{x_J} \frac{1}{1-\rho^{ext}_{L,(r-a)^+}} \, da
\end{align*}

Where $\rho^{ext}_{L, r} = \lambda \left(\int_{x=0}^{L} x f(x) dx + \int_{y = 0}^{r} \int_{x = L}^{\infty} x \cdot g(x,y) dx dy + \int_{y = r}^{\infty} \int_{x = L}^{\infty} L \cdot g(x,y) dx dy\right)$.

\end{proof}

\lemmadelaypsrvl*

\begin{proof}
In this case, $J$'s worst future rank is $rank_{d_J}^{\text{worst}} (a)=  \langle 3, r -a \rangle $ and $r_{worst}= rank_{d_J}^{\text{worst}} (0) = \langle 3, r \rangle$. Now we calculate $X^{\text{new}}[rank_{d_J}^{\text{worst}} (a)]$, $X_{0}^{\text{old}}[r_{worst}]$ and $X_{i}^{\text{old}}[r_{worst}]$ for long jobs in the server cost most.

$X^{\text{new}}[rank_{d_J}^{\text{worst}} (a)]$:
$J$'s delay due to $K$ also depends on $K$ is short or long.
If $I$ is short, then it remains it is scheduled until its completion. Alternatively, if $I$ is long its prediction is scheduled for $c_2$. In addition, if $r_I \le r$ then it is scheduled until completion, otherwise it is scheduled until $L$.

\[X_{x_K}^{new}[\langle 3, r - a_J \rangle]  = 
\begin{cases}
    x_K   & \text{$K$ is short } \\
    (c_2 + x_K)\cdot \mathds{1}(r_K < r- a_J) + (c_2 + L) \cdot \mathds{1}(r_K \ge r- a_J) & \text{$K$ is long}
\end{cases}
\]

\begin{align*}
\mathbb{E}[X^{new}[\langle 3, r - a \rangle]] = & \int_{x=0}^{L} x f(x) dx + \int_{y = 0}^{r-a} \int_{x = L}^{\infty} (x+c_2) \cdot g(x,y) dx dy \\
& + \int_{y = r-a}^{\infty} \int_{x = L}^{\infty} (L + c_2) \cdot g(x,y) dx dy\\
\end{align*}

$X_{0}^{\text{old}}[r_{worst}]$:
The analysis is similar to the new arrival job. Whether another job $I$ is original or recycled depends on whether it is short or long, and in the case it is long, it also depends on its predicted size relative to J's prediction.

\[
X_{0, x_I}^{\text{old}}[\langle 3, r \rangle] = 
\begin{cases}
    x_K   & \text{$K$ is short } \\
    (c_2 + x_K)\cdot \mathds{1}(r_K < r) + (c_2 + L) \cdot \mathds{1}(r_K \ge r) & \text{$K$ is long}
\end{cases}
\]

\begin{align*}
\mathbb{E}[X_{0}^{\text{old}}[\langle 3, r \rangle]] =& \int_{x=0}^{L} x f(x) dx + \int_{y = 0}^{r} \int_{x = L}^{\infty} (x+c_2) \cdot g(x,y) dx dy + \int_{y = r}^{\infty} \int_{x = L}^{\infty} (L + c_2) \cdot g(x,y) dx dy\\
\end{align*}

\begin{align*}
\mathbb{E}[(X_{0}^{\text{old}}[\langle 3, r \rangle])^2]] = & \int_{x=0}^{L} x^2 f(x) dx + \int_{y = 0}^{r} \int_{x = L}^{\infty} (x+c_2)^2 \cdot g(x,y) dx dy \\
&+ \int_{y = r}^{\infty} \int_{x = L}^{\infty} (L + c_2)^2 \cdot g(x,y) dx dy\\
\end{align*}

$X_{i}^{\text{old}}[r_{worst}]$:
As described before, if another job $I$ is long and if $r_I > r$, then $I$ starts discarded but becomes recycled when $r_I - a = r$. This starts at age $a = r_I -r$ and continues until completion, which will be $x_I -L - a_I = x_I -L - (r_I - r)$. Thus, for $i \ge 2$, $X_{i, x_I}^{\text{old}}[\langle 3, r \rangle] = 0$. Let $t = r_I$:

\[
X_{1, x_I}^{\text{old}}[\langle 3, r \rangle] =
\begin{cases}
    0   & \text{if $I$ is short} \\
    x_I -L - (t - r) & \text{if $I$ is long}
\end{cases}
\]

\begin{align*}
\mathbb{E}[X_{1}^{\text{old}}[\langle 3, r \rangle]^2] = \int_{t = r}^{\infty} \int_{x =L+ t-r}^{\infty} g(x, t)\cdot(x -L- (t - r))^2 \cdot dx dt
\end{align*}

Applying Theorem~\ref{soaptheorem} leads to the result.

\begin{align*}
    \mathbb{E}[T(x_J, r)]_{srv}^{{\algtwo{}, L}}  = &\frac{\lambda}{2(1-\rho^{srv}_{L,r})^2} \Bigg(\int_{x=0}^{L} x^2 f(x) dx + \int_{y = 0}^{r} \int_{x = L}^{\infty} (x+c_2)^2 \cdot g(x,y) dx dy \\
    & + \int_{y = r}^{\infty} \int_{x = L}^{\infty} (L + c_2)^2 \cdot g(x,y) dx dy \\
    &+\int_{t = r}^{\infty} \int_{x =L+ t-r}^{\infty} g(x, t)\cdot(x -L- (t - r))^2 \cdot dx dt\Bigg) +  \int_{0}^{x_J} \frac{1}{1-\rho^{srv}_{L,(r-a)^+}} \, da
\end{align*}

Where $\rho^{srv}_{L, r} = \lambda \left(\int_{x=0}^{L} x f(x) dx + \int_{y = 0}^{r} \int_{x = L}^{\infty} (x+c_2) \cdot g(x,y) dx dy + \int_{y = r}^{\infty} \int_{x = L}^{\infty} (L + c_2) \cdot g(x,y) dx dy\right)$.

\end{proof}

\end{document}